\documentclass{article}
\usepackage{lmodern}
\usepackage{iclr2027_conference,times}
\usepackage{amsmath,amssymb,amsthm,graphicx,booktabs}
\usepackage{algorithm,algorithmic}
\usepackage{xcolor,hyperref,url,wrapfig,colortbl}
\usepackage{microtype}
\usepackage{placeins}
\usepackage{mathtools}
\usepackage{nicematrix}
\definecolor{CoEMBlue}{HTML}{E8F1F8}

\newif\ifcoemprose
\coemprosetrue
\AddToHook{env/tabular/begin}{\coemprosefalse}
\AddToHook{env/algorithmic/begin}{\coemprosefalse}
\AddToHook{env/minipage/begin}{\coemprosefalse}
\AddToHook{env/thebibliography/begin}{\coemprosefalse}
\AddToHook{para/begin}{\ifcoemprose\parfillskip=0pt plus 0.30\linewidth\relax\fi}
\newtheorem{proposition}{Proposition}
\newtheorem{theorem}{Theorem}

\newtheorem{corollary}{Corollary}

\usepackage{booktabs}
\usepackage{multirow}
\usepackage[table]{xcolor}
\usepackage{xspace}

\newcommand{\our}{\textsc{CoEM}\xspace}

\hypersetup{colorlinks=true,linkcolor=blue!45!black,citecolor=blue!45!black,urlcolor=blue!45!black,pdfauthor={Jingguang Li, Yebo Wu, Zuyi Guo, Kailang Ma, Xianjie DAI, Han Zheng, Benwang Chen, Li Li, Can Rong, Heye Huang},pdftitle={CoEM: Empowering Long-Context Reasoning with Commit-on-Evidence Memory}}
\title{CoEM: Empowering Long-Context Reasoning with Commit-on-Evidence Memory}
\author{Jingguang Li\textsuperscript{1}, Yebo Wu\textsuperscript{2}, Zuyi Guo\textsuperscript{1}, Kailang Ma\textsuperscript{1}, Xianjie DAI\textsuperscript{1}\\
\bfseries Han Zheng\textsuperscript{3}, Benwang Chen\textsuperscript{1}, Li Li\textsuperscript{2}, Can Rong\textsuperscript{3}, Heye Huang\textsuperscript{1}\thanks{Corresponding author: \href{mailto:heye.huang@kaist.ac.kr}{\nolinkurl{heye.huang@kaist.ac.kr}}.}\\[3pt]
\normalfont\small\textsuperscript{1}Korea Advanced Institute of Science \& Technology\\
\normalfont\small\textsuperscript{2}University of Macau\\
\normalfont\small\textsuperscript{3}Massachusetts Institute of Technology}
\iclrfinalcopy
\begin{document}
\maketitle
\fancyhead{}
\renewcommand{\headrulewidth}{0pt}
\begin{abstract}
Long-context reasoning is essential for complex and long-horizon tasks, yet the performance of large language models (LLMs) degrades as context length increases. Recent approaches address this by processing input chunk by chunk while maintaining a bounded textual memory in model context. However, premature information compression can discard critical details essential for subsequent reasoning.
In this paper, we introduce Commit-on-Evidence Memory (\our), which learns when to convert source evidence into compact memory facts. Specifically, under a fixed context-memory budget, \our preserves potentially useful source excerpts verbatim in a pending set, allowing subsequent context to clarify their relevance before irreversible compression. As new context arrives, a learned policy revisits each pending excerpt and decides whether to promote it to the committed memory, retain it for further consideration, or discard it. 
A frozen verifier ensures proposed facts are accepted only if supported by retained excerpts and current context. To further guide effective memory management, we train this policy using reinforcement learning by combining fine-grained, step-level evidence rewards with final answer rewards.
Extensive experiments demonstrate that \our consistently improves long-context reasoning. When evaluated on 6,400 documents long-context input, \our outperforms the strongest memory baseline by 10.4–11.4 F1 points on Qwen3.5-9B. Code repository: \url{https://github.com/benmagnifico/CoEM}.

\end{abstract}
\section{Introduction}
\label{sec:introduction}

Long-context reasoning is fundamental to complex and long-horizon tasks such as cross-document multi-hop question answering, repository-level debugging, and scientific evidence synthesis, where the evidence needed to reach a conclusion is often distributed across distant passages~\citep{bai-etal-2024-longbench,bai-etal-2025-longbench,wu2026devft,wu2026tsembed}. 
However, simply extending the context window does not ensure effective reasoning. As the length of input grows, relevant evidence becomes increasingly sparse among distractors, so that LLMs struggle to identify and integrate dependencies over long distances, leading to substantial performance degradation~\citep{hsieh2024ruler,liu2025comprehensivesurveylongcontext,wu2026fedllmsurvey}. Processing the full text altogether also incurs rapidly increasing computational and memory costs of training and inference, and sufficiently long inputs may exceed the model's context window and lead to task failure~\citep{warner-etal-2025-smarter,wu2026transcending,wu2026smartfed}. 

\vspace{-2mm}
To address these challenges, recent work has explored a recurrent-memory paradigm that processes long contexts sequentially in an RNN-like manner~\citep{yu2026memagent}. At each step, the memory agent summarizes important information from the current chunk into a bounded textual memory in its context window and carries the memory to the next step. This protocol bounds retained information in the memory and avoids repeated source processing.
Subsequent methods have explored memory callbacks that revisit historical memory snapshots to mitigate information loss from overwriting, as well as developed gated recurrent memory that skip unnecessary memory updates and terminate reading once sufficient evidence has been gathered~\citep{shi2025rememr1,sheng2026grumem,wu2025fedpruner,wu2026chainfed}.
Despite these advances, as shown in the left and middle part of Figure~\ref{fig:limitation}, immediate memory-update decisions risk discarding seemingly irrelevant details that later become essential for reasoning.
Under this protocol, this details omitted from memory cannot be recovered once discarded, ultimately leading to reasoning failure.

\begin{figure}[t]
\centering
\includegraphics[width=\linewidth]{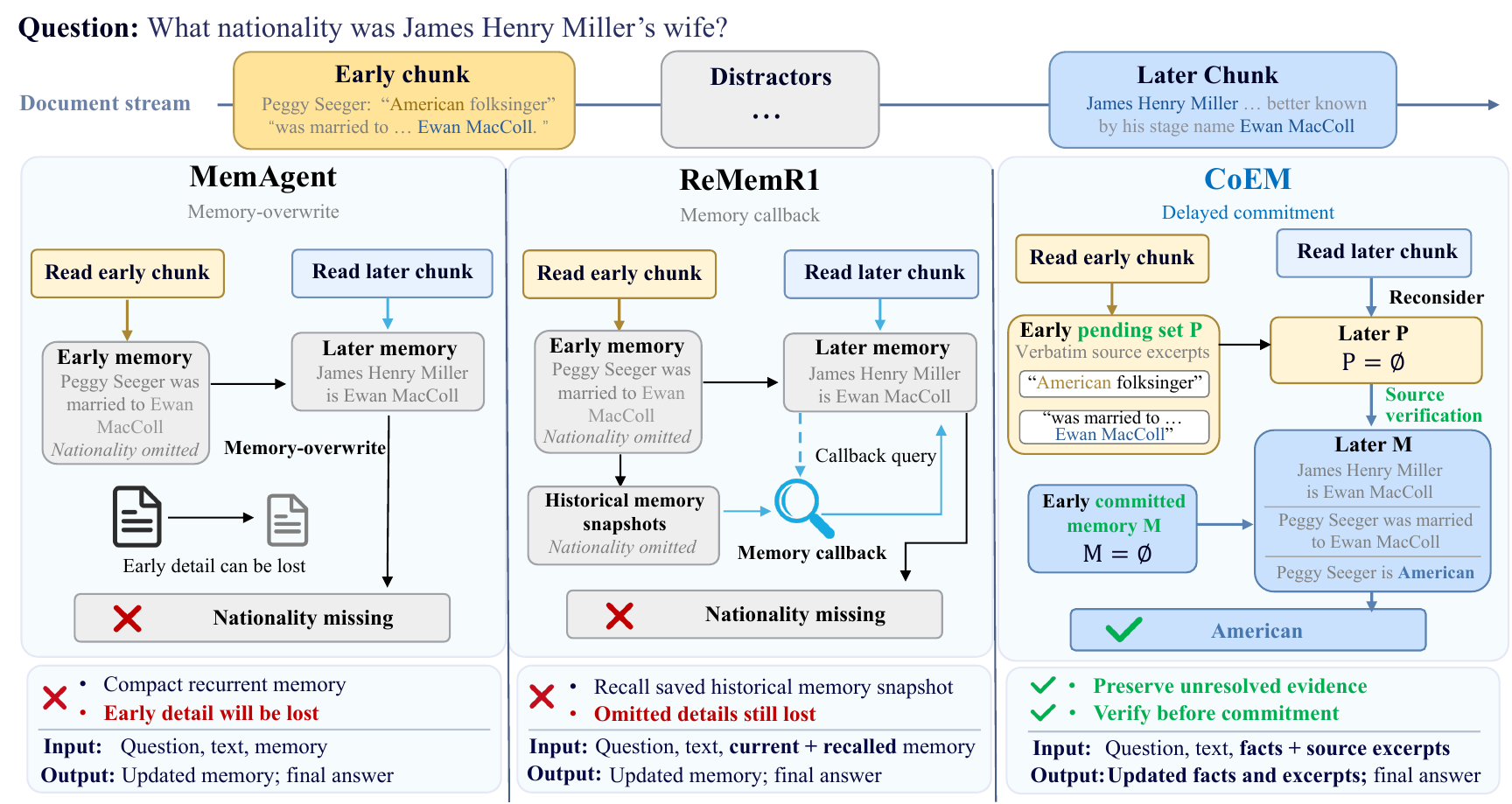}
\vspace{-7mm}
\caption{
Methodology comparison of \our and two baselines on a HotpotQA question~\citep{yang2018hotpotqa}. 
\our does not make a decision about the unclear source evidence until its value is clarified by the later evidence. 
}
\label{fig:limitation}
\vspace{-6mm}
\end{figure}

Our diagnostic analysis across three multi-hop question-answering benchmarks finds that 32.03\%--44.53\% of random sampled questions exhibit delayed evidence relevance (details are provided in Appendix~\ref{app:retention-diagnostics}).
This prevalence highlights the pressing need to preserve evidence whose importance becomes apparent only after later context is observed. Meanwhile, as shown in Figure~\ref{fig:memory-occupancy}, MemAgent occupies substantially more memory than GRU-Mem yet achieves lower question-answering performance. This contrast suggests room to retain more useful evidence within the same memory budget through careful decisions about what to preserve and when to compress it. Motivated by these observations, we introduce Commit-on-Evidence Memory (\our), a recurrent memory agent that learns when to commit evidence under partial observability of long context and evidence.

\begin{wrapfigure}{r}{0.5\textwidth}
\centering
\vspace{-4mm}
\includegraphics[width=\linewidth]{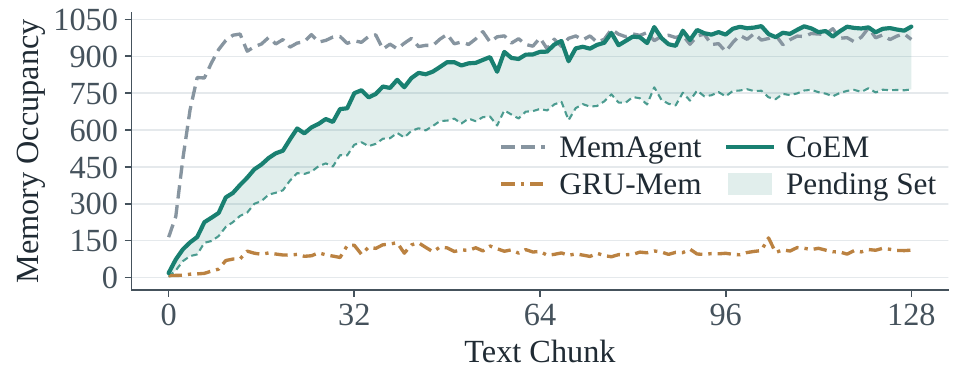}
\vspace{-9mm}
\caption{Memory occupancy of different methods throughout reading procedure on HotpotQA.}
\label{fig:memory-occupancy}
\vspace{-4mm}
\end{wrapfigure}

As shown in the right part of Figure~\ref{fig:limitation}, within a fixed context-memory budget, \our maintains a committed memory of compact facts and a pending set of short, verbatim source excerpts, both of which are carried into subsequent model contexts. As each chunk arrives, the policy revisits pending evidence and chooses to \textsc{Promote}, \textsc{Keep}, or \textsc{Drop} each evidence. \textsc{Keep} preserves the evidence until later observations clarify its relevance, whereas \textsc{Promote} converts it into a compact fact and commits the fact to the committed memory. Before the commitment, a frozen natural language inference (NLI) verifier checks whether the corresponding excerpts entail the promoted fact. 
We train the policy utilizing reinforcement learning strengthened by dense step-level evidence rewards together with the final answer rewards. 
In this way, the memory agent learns to coordinate evidence retention and verification, having knowledge about when and how to commit true evidence.

We evaluate \our using Qwen3.5-4B and Qwen3.5-9B backbones on HotpotQA, 2WikiMultiHopQA, and MuSiQue. Across both model scales and all tested context lengths, \our consistently outperforms prior recurrent-memory agents. On the longest input contexts of 6,400 documents, \our with Qwen3.5-9B improves answer F1 by 10.4--11.4 points over the strongest baseline under the same memory budget. 
\our also remains efficient in practice by processing context sequentially with bounded evidence pending set and using a lightweight NLI verifier on short source excerpts.
Overall, our key contributions are summarized as follows:

\vspace{-3mm}
\begin{itemize}
    \item  We identify a fundamental limitation of existing recurrent memory agents: premature information compression can irreversibly discard evidence that appears irrelevant when first observed but later becomes essential for reasoning.
    
    \item We introduce \our, which partitions a fixed context memory budget into committed memory and pending set. As new chunk arrives, a trained policy promotes, retains, or discards pending evidence, and a frozen verifier only admits source-supported facts.

    \item  We evaluate \our with both Qwen3.5-4B and Qwen3.5-9B backbones on three multi-hop question-answering benchmarks across context lengths ranging from 50 to 6,400 documents. \our consistently outperforms existing recurrent memory methods under the same context-memory budget while maintaining computational efficiency.

\end{itemize}


\section{Related Work}
\label{sec:related-work}

\paragraph{Memory Mechanisms for LLMs.}
Memory-augmented LLM systems extend their working context by storing and retrieving past information. Retrieval-augmented generation uses external corpora as non-parametric memory~\citep{wang-etal-2024-searching}. MemoryBank maintains conversational memories with selective forgetting~\citep{zhong2023memorybankenhancinglargelanguage}, and MemGPT manages information across hierarchical memory tiers~\citep{packer2024memgptllmsoperatingsystems}. ReadAgent constructs gist memories and reopens linked source passages~\citep{3692070.3693124}, whereas LightMem separates short-term organization from long-term consolidation~\citep{ICLR2026_a05b7265}. These approaches largely rely on external stores to retain information beyond the active context. In contrast, \our carries all retained information in the strictly bounded recurrent textual memory.

\vspace{-3mm}
\paragraph{Recurrent Memory for Long-Context Reasoning.}
Recurrent memory agents process inputs chunk by chunk while carrying a bounded textual memory forward. MemAgent learns an RL-based memory-overwrite policy to process inputs beyond its native context length~\citep{yu2026memagent}. ReMemR1 adds callbacks over historical memory snapshots~\citep{shi2025rememr1}, while GRU-Mem uses update and exit gates to skip unnecessary writes and terminate reading early~\citep{sheng2026grumem}. These mechanisms improve memory reuse and efficiency, but cannot recover source details discarded before their relevance becomes clear. \our retains unresolved evidence in a bounded pending set, allowing later context to clarify their relevance before they are processed.

\vspace{-3mm}
\paragraph{Reinforcement Learning for Memory Management.}
Final answer rewards provide limited supervision for intermediate memory decisions. ReMemR1 supplements outcome rewards with step-level feedback for memory updates and callbacks~\citep{shi2025rememr1}. LongRLVR introduces verifiable context rewards for evidence selection~\citep{chen2026longrlvr}, while InfoMem evaluates final-memory utility through answer-conditioned information gain~\citep{han2026infomem}. \our combines final-answer rewards with step-level evidence rewards for source-supported memory proposals and valid operations. A frozen verifier checks proposed facts against available source excerpts, while the evidence reward propagates subsequent signals to earlier retention and commitment decisions.

\section{\our: Memory Control under Partial Observability}
\label{sec:method}

Figure~\ref{fig:components} presents an overview of \our, which retains unresolved evidence for future decisions and uses a frozen verifier to check source support before commitment. Its memory management policy is trained through reinforcement learning with step-level evidence rewards as shown in Figure~\ref{fig:training}.

\begin{figure}[t]
\centering
\includegraphics[width=0.9\linewidth]{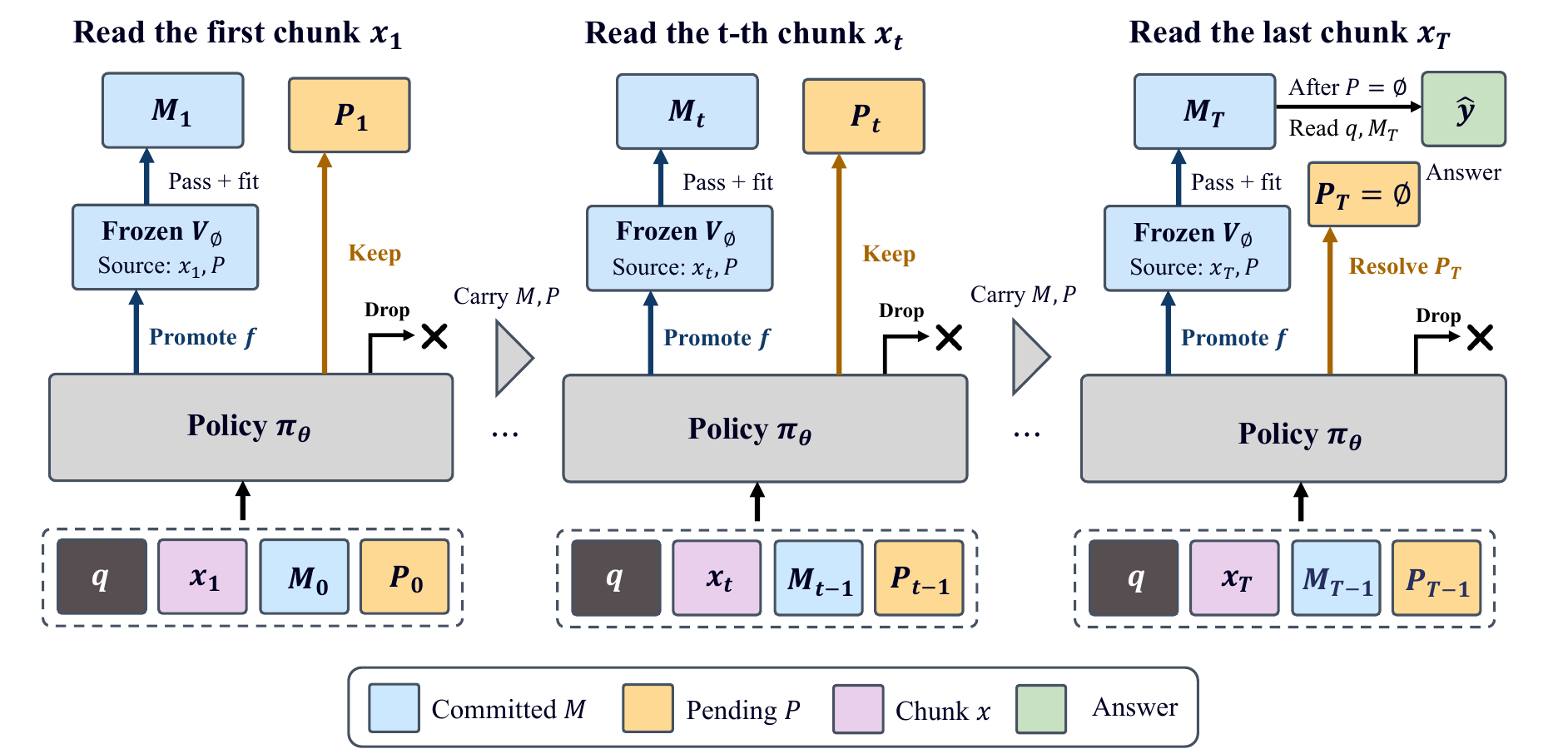}
\vspace{-3mm}
\caption{Overview of \our. At intermediate steps $t<T$, the policy promotes, keeps, or drops evidence. A frozen verifier checks proposed facts before commitment. At the final step $T$, remaining pending evidence is resolved before the model answers from committed memory.
}
\label{fig:components}
\vspace{-5mm}
\end{figure}

\subsection{Problem Setting and Context Memory}
\label{sec:state}

Following MemAgent~\citep{yu2026memagent}, given a question $q$ and a chunked document stream $D=(x_1,\ldots,x_T)$, the model reads chunks of at most $C$ tokens step by step and generates an answer after reading the final chunk. We formulate this process as a
finite-horizon POMDP~\citep{kaelbling1998pomdp}: future chunks are unobserved, and historical chunks cannot be revisited. 
Unlike conventional recurrent memory agents, \our transforms their context memory into a bounded state $h_t=(M_t,P_t)$ which is carried across reading steps, 
where the committed memory $M_t$ stores compact committed facts and the pending set $P_t$ retains verbatim source excerpts whose relevance remains unresolved. 
At step $t$, the policy observes the question $q$, the current chunk $x_t$, and the preceding state $h_{t-1}$, and then samples an action $a_{t}$ that updates both committed memory and pending set:
\begin{equation}
\label{eq:controller}
\small
\begin{aligned}
\mathcal{C}_t
    &\coloneqq \left(q, x_t, h_{t-1}\right)
     = \left(q, x_t, M_{t-1}, P_{t-1}\right), \\
a_t
    &\sim \pi_\theta\!\left(\,\cdot \mid \mathcal{C}_t\right), \\
h_t
    &\coloneqq \left(M_t, P_t\right)
     = F\!\left(h_{t-1}, x_t, a_t\right).
\end{aligned}
\end{equation}
where $\mathcal{C}_t$ denotes the context visible to the policy $\pi_\theta$, parameterized by $\theta$, at step $t$. The step-level action $a_t$ comprises all memory decisions, including any operations that reconsider pending evidence in $P_{t-1}$. The \our{} controller $F$ is a deterministic, non-learned procedure that executes these decisions with source verification and budget checks, producing the updated state $h_t=(M_t,P_t)$. This bounded state is the only information from previous chunks $x_1,\ldots,x_t$ carried into step $t+1$; Appendix~\ref{sec:theory-pomdp} formalizes its relationship to the full POMDP state.

The two components of $h_t$ serve complementary roles. Committed memory $M_t$ contains entries $e=(f,\rho)$, where $f$ is a concise, source-supported fact and $\rho$ identifies its supporting source spans. The pending set $P_t$ contains evidence $p=(s,\rho,b)$, where $s$ is a
verbatim source excerpt and $b$ is its current rank in admission order. Ranks are renumbered whenever pending evidence are removed from $P_t$. In both $M_t$ and $P_t$, $\rho$ records the source chunk and token offsets for provenance; it cannot recover the text from previous chunks or removed evidence.

The budgets of $h_t$, $M_t$, and $P_t$ are strictly bounded. Let $\ell$ count serialized backbone tokens, including text, identifiers, ranks, and separators.
Both $M_t$, and $P_t$ must remain within their allocated fixed budget:
\begin{equation}
\small
 B_M+B_P=B,\qquad \ell(M_t)\leq B_M,\qquad
 \ell(P_t)\leq B_P,\qquad \ell(h_t)\leq B,
 \label{eq:budget}
\end{equation}
where $B$ is the total context memory budget same as conventional memory agents, and $B_M$ and $B_P$ are the portions allocated to committed facts and pending set, respectively. Any formatting overhead in $h_t$ is charged to $B_M$, ensuring that all persistent tokens are included in the budget.

\subsection{Unresolved Evidence Management}
\label{sec:defer}

Retaining unresolved evidence allows the policy to defer decisions until later context clarifies its relevance. At each reading step $t\in\{1,\ldots,T\}$, \our processes the current chunk $x_t$ starting from the preceding state $(M_{t-1},P_{t-1})$. For evidence from $x_t$ selected for retention, the policy specifies source offsets, and \our copies the corresponding tokens from $x_t$ into the pending set $P_t$.

During step $t$, the policy revisits evidence carried in $P_{t-1}$ using information from the current chunk $x_t$. For each reconsideration of pending evidence $p$, the policy selects an action from
\begin{equation}
\small
 \mathcal{A}
 =\{\textsc{Promote},\textsc{Keep},\textsc{Drop}\}.
 \label{eq:actions}
\end{equation}
\textsc{Promote} uses $p$ to propose a compact fact for committed memory $M_t$. A successful promotion updates committed memory $M_t$ before its source excerpt $s$ is released.
\textsc{Keep} preserves $p$ unchanged in the pending set $P_t$, while \textsc{Drop} removes $p$ and releases occupied capacity of $P_t$.
A promotion is accepted only after passing the source verification and memory capacity checks introduced in Section~\ref{sec:verification}.
For evidence $p$ admitted to the pending set $P_t$, let $t_{\mathrm{arr}}(p)$ denote its admission step.
We define its commitment time as
\begin{equation}
\small
 t_p
 =\inf\left\{
 t\in\{t_{\mathrm{arr}}(p),\ldots,T\}:
 \text{a promotion using }p\text{ is accepted during step }t
 \right\},
 \label{eq:stopping-time}
\end{equation}
where $t_p=\infty$ if $p$ is never committed during the reading process.

At an intermediate step $t<T$, \textsc{Keep} allows evidence to remain pending without an age limit, subject to the pending set budget $B_P$.
If admitting new evidence would exceed $B_P$, \our allows bounded reconsideration to release capacity.
If space remains insufficient, \our repeatedly resolves the oldest pending evidence through successful promotion or dropping until the new evidence fits.
Appendix~\ref{sec:algorithm-details} specifies candidate validation and the limits on reconsideration and promotion attempts.
The resulting state $(M_t,P_t)$ is then carried into step $t+1$.

At the final step $t=T$, \our processes $x_T$ using the same procedure, then resolves all remaining pending evidence before completing the step.
Each remaining item must be successfully promoted or dropped, with \textsc{Keep} disabled.
The final pending set $P_T$ is therefore empty, and the model generates its answer $\hat y$ from the question $q$ and final committed memory $M_T$:
\begin{equation}
\small
 P_T=\varnothing,\qquad \hat y\sim\pi_\theta(\cdot\mid q,M_T),
 \label{eq:answer}
\end{equation}

\subsection{Source-Verified Commitment and Atomic Memory Updates}
\label{sec:verification}

For each proposed entry $e=(f,\rho)$ at step $t$, \our collects the cited source spans still available in $x_t$ or the pending set $P_t$. A fact combining information from multiple passages must cite all excerpts required to support it. These verbatim excerpts are concatenated into a premise $S_e$. 
A frozen NLI classifier $V_\phi$ then evaluates whether $S_e$ entails the proposed fact $f$:
\begin{equation}
\small
 v_e
 =V_\phi(\mathrm{entailment}\mid S_e,f),
 \qquad
 g(e,S_e)
 =\mathbf 1[v_e\geq\eta]\,
  \mathbf 1[\mathrm{valid}(e,S_e)],
 \label{eq:gate}
\end{equation}
where $v_e$ is the entailment probability produced by the verifier with frozen parameters $\phi$, $\eta$ is the acceptance threshold, and $g(e,S_e)\in\{0,1\}$ is the resulting verification gate. 
$\mathrm{valid}(e,S_e)$ checks that 
all cited spans remain available, and the cited text exactly matches the original source.
If a promotion is rejected, committed memory and its supporting pending excerpts remain unchanged unless capacity occupancy of the pending set needs to be released.
We instantiate $V_\phi$ as a DeBERTaV3-small NLI cross-encoder~\citep{he2023debertav3} and keep it frozen throughout policy optimization with more details in Appendix~\ref{sec:verifier-details}. The verifier determines whether a proposed fact is supported by its cited sources.

To create space in committed memory, the policy may specify a set of whole entries $E^-\subseteq M$ for removal and a set of new entries $E^+$ for insertion. \our treats the proposed replacement as an atomic transaction: it is applied only if every new entry passes verification and the resulting memory satisfies the committed-memory
budget. Formally, committed memory $M$ is updated as follows:
\vspace{-2mm}
\begin{equation}
\small
 \widetilde M=(M\setminus E^-)\cup E^+,\qquad
 M'=
 \begin{cases}
 \widetilde M,
 & \ell(\widetilde M)\leq B_M
   \ \land\
   \displaystyle\bigwedge_{e\in E^+} g(e,S_e)=1,\\
 M,
 & \text{otherwise},
 \end{cases}
 \label{eq:transaction}
\end{equation}
where $\widetilde M$ is the proposed replacement and $M'$ is the committed memory retained after validation. The complete serialization of $M'$ and $P$ must also satisfy the global constraint in Equation~(\ref{eq:budget}). If any verification or capacity check fails, the entire transaction is rejected, leaving all existing entries
unchanged. Rewriting or merging an existing fact is treated as the insertion of a new statement and therefore requires renewed source verification. If the necessary source excerpts are no longer available in $x_t$ or $P$,
the existing fact may only be preserved unchanged or deleted.

\subsection{Step-Level Evidence Reward for Learning Delayed Memory Decisions}
\label{sec:training}

We train the policy with a combination of a trajectory-level answer reward and a step-level evidence reward.
Our step-level evidence reward evaluates source support and operation validity, with a return-to-go that carries subsequent verification feedback to earlier memory decisions.
For each training sample $(q,D,y)$, the rollout policy $\pi_{\theta_{\mathrm{old}}}$, which is frozen during trajectory collection, samples $G$ complete trajectories $\{\tau_i\}_{i=1}^{G}$.
The reference answer $y$ is used only for reward computation, giving the final answer reward
\begin{equation}
\small
 R_i^{\mathrm{ans}}=r_{\mathrm{ans}}(\hat y_i,y)\in[0,1],
\end{equation}
where $r_{\mathrm{ans}}$ compares the generated answer $\hat y_i$ from the policy with $y$.

\paragraph{Step-Level Evidence Reward.}
At step $t$, $m_{i,t}^{\mathrm{inv}}$ counts invalid operations, including violations of action format, source availability or matching, and memory constraints.
Among proposals passing all non-entailment checks, $m_{i,t}^{\mathrm{rej}}$ counts facts rejected by the frozen verifier because $v_e<\eta$.
Invalid operations contribute only to $m_{i,t}^{\mathrm{inv}}$, in order to avoid double counting.
We define the step-level evidence reward and its return-to-go as
\vspace{-2mm}
\begin{equation}
\small
 r_{i,t}^{\mathrm{evd}}
 =-\lambda_v m_{i,t}^{\mathrm{rej}}
  -\lambda_f m_{i,t}^{\mathrm{inv}},
 \qquad
 L_{i,t}=\frac{1}{T}\sum_{u=t}^{T}r_{i,u}^{\mathrm{evd}},
 \label{eq:reward}
\end{equation}
where $\lambda_v,\lambda_f\geq0$ weight the two failure types, and $1/T$ normalizes the return by the document's total number of chunks.
Feedback at step $u$ enters every return $L_{i,t}$ with $t\leq u$, allowing later verification outcomes to influence earlier admission and \textsc{Keep} decisions.
Accepted committed facts receive no per-write bonus and the final answer rewards guide the selection of task-relevant facts.

At each step, the rollouts have read the same document prefix but may retain different committed memories and pending sets.
We center answer rewards and evidence returns within the rollout group:
\begin{equation}
 \small
 A_i^{\mathrm{ans}}
 =R_i^{\mathrm{ans}}-\frac{1}{G}\sum_{j=1}^{G}R_j^{\mathrm{ans}},
 \qquad
 A_{i,t}^{\mathrm{evd}}
 =L_{i,t}-\frac{1}{G}\sum_{j=1}^{G}L_{j,t},
 \label{eq:local_advantage}
\end{equation}
and combine the resulting advantages as
\begin{equation}
\small
 A_{i,t}
 =\alpha A_i^{\mathrm{ans}}+(1-\alpha)A_{i,t}^{\mathrm{evd}},
 \qquad 0\leq\alpha\leq1,
 \label{eq:combined_advantage}
\end{equation}
where $\alpha$ balances answer quality and evidence validity.
During answer generation, $L_{i,T+1}=0$, so $A_{i,T+1}=\alpha A_i^{\mathrm{ans}}$.

\begin{figure}[t]
\centering
\includegraphics[width=\linewidth]{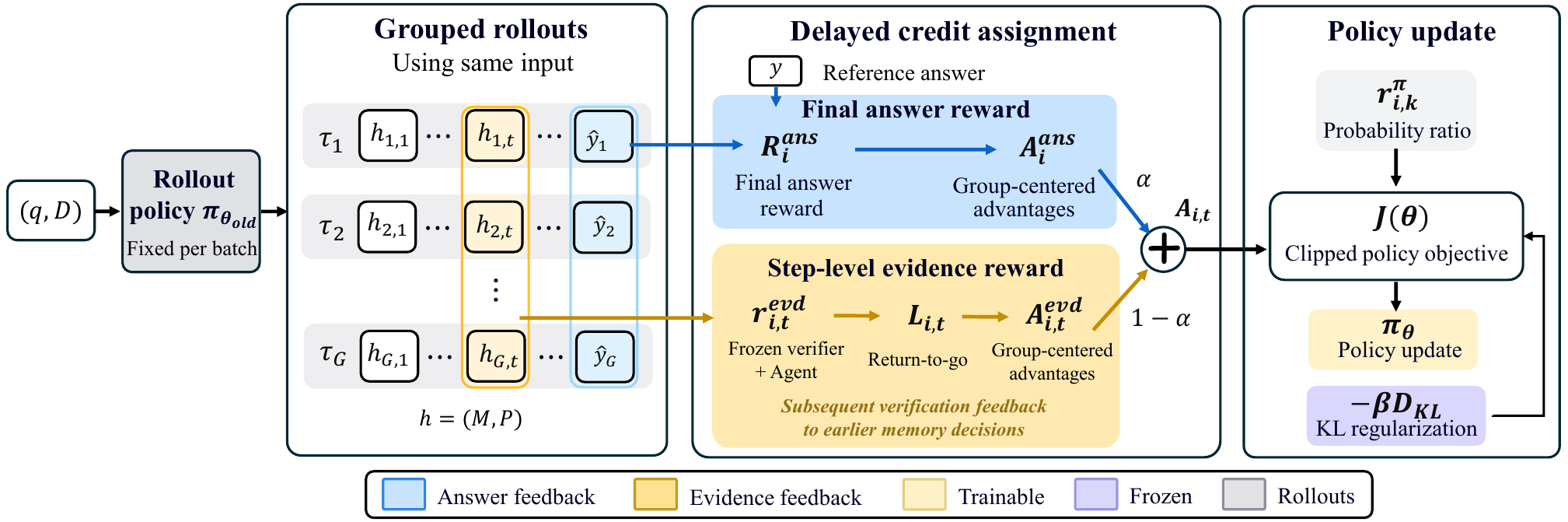}
\vspace{-5mm}
\caption{RL Training with step-level evidence rewards. Grouped rollouts combine final answer rewards with evidence return-to-go, carrying later source-verification feedback to earlier memory decisions. Group-centered advantages drive the clipped policy update.}
\label{fig:training}
\vspace{-3mm}
\end{figure}

\paragraph{Policy Optimization.}
We apply the combined advantages through a clipped policy objective~\citep{schulman2017ppo}.
Let $\mathcal I_i$ index policy-generated tokens $w_{i,k}$ with conditioning contexts $\xi_{i,k}$, and let $N_{\mathrm{gen}}=\sum_i|\mathcal I_i|$.
The index $t(i,k)$ denotes the reading step, or $T+1$ for answer generation, and all policy operations within a chunk share its advantage.
We maximize $\mathcal J$ as follows:
\vspace{-1mm}
\begin{equation}
\small
 \begin{aligned}
 \mathcal J(\theta)
 &=\mathbb E\!\left[
 \frac1{N_{\mathrm{gen}}}
 \sum_{i=1}^G\sum_{k\in\mathcal I_i}
 \ell_\epsilon\!\left(r^\pi_{i,k}(\theta),A_{i,t(i,k)}\right)
 -\beta\mathcal D_{\mathrm{KL}}(\theta)\right],\\
 \ell_\epsilon(r,A)
 &=\min\{rA,\operatorname{clip}(r,1-\epsilon,1+\epsilon)A\},\\
 r^\pi_{i,k}(\theta)
 &=\frac{\pi_\theta(w_{i,k}\mid\xi_{i,k})}
 {\pi_{\theta_{\mathrm{old}}}(w_{i,k}\mid\xi_{i,k})},\\
 \mathcal D_{\mathrm{KL}}(\theta)
 &=\frac1{N_{\mathrm{gen}}}
 \sum_{i=1}^G\sum_{k\in\mathcal I_i}
 D_{\mathrm{KL}}\!\left(
 \pi_\theta(\cdot\mid\xi_{i,k})
 \,\|\,
 \pi_{\mathrm{ref}}(\cdot\mid\xi_{i,k})\right),
 \end{aligned}
 \label{eq:clipped_objective}
\end{equation}
where $\epsilon$ is the clipping width, $\pi_{\mathrm{ref}}$ is the frozen initial reference policy, and $\beta\geq0$ weights KL regularization.
The expectation averages over training inputs and sampled rollout groups. The detailed definition of $D_{\mathrm{KL}}$ can be found in Appendix~\ref{sec:algorithm-details}.
Only policy-generated tokens receive policy gradients while source text, \our agent execution, and frozen-verifier outputs do not.
Appendix~\ref{sec:theory} analyzes evidence retention and the learning signal from verification feedback.

\section{Experiments}
\label{sec:experiments}
\subsection{Experimental Setup}
\label{sec:experimental-setup}
\paragraph{Benchmarks and Models.}
We evaluate \our on three multi-hop question-answering benchmarks using Qwen3.5-4B and Qwen3.5-9B~\citep{qwen2026qwen35fourb,qwen2026qwen35nineb}.
HotpotQA~\citep{yang2018hotpotqa} provides the training data and in-distribution (ID) evaluation, while 2WikiMultiHopQA~\citep{ho2020twowiki} and MuSiQue~\citep{trivedi2022musique} provide out-of-distribution (OOD) evaluation without additional training. Following ReMemR1~\citep{shi2025rememr1}, we sample 128 held-out questions from each dataset and evaluate the same questions at eight context lengths ranging from 50 to 6,400 documents. 
Moreover, we adopt the normalized token-level answer F1 as the primary metric (see Appendix~\ref{app:data-construction} for details).

\vspace{-3mm}
\paragraph{Baselines and Evaluation Protocol.}
We compare \our{} with general-purpose LLMs and recurrent memory agents: MemAgent~\citep{yu2026memagent}, GRU-Mem~\citep{sheng2026grumem}, and ReMemR1~\citep{shi2025rememr1}. All memory agents share the same training setup and 1,024-token budget, which for ReMemR1 covers its current memory, one historical snapshot, and callback query. Its native archive-enabled configuration still trails \our{} by 2.3--4.3 average F1 points (Table~\ref{tab:native-archive}).

\begin{table}[t]
\centering
\caption{
Answer F1 (\%) across context lengths. $\dagger$ indicates that all retained ReMemR1 memories and queries share the 1,024-token budget. \texttt{--} indicates that the length of input exceeds the context window length of the model. Bold denotes the best result for each backbone and document count.
}
\label{tab:main}
\begingroup
\scriptsize
\setlength{\tabcolsep}{2.6pt}
\renewcommand{\arraystretch}{1.01}

\begin{NiceTabular*}{\linewidth}{
  @{\extracolsep{\fill}}
  ll
  *{8}{r}
  @{\hspace{5pt}}
  r
  @{}
}
\CodeBefore
  \rowcolor{black!8}{3,14,25}
\Body
\toprule
\multirow{2}{*}{Backbone}
& \multirow{2}{*}{Method}
& \multicolumn{8}{c}{Number of documents}
& \multirow{2}{*}{Avg.} \\
\cmidrule(lr){3-10}
& & 50 & 100 & 200 & 400 & 800 & 1,600 & 3,200 & 6,400 & \\
\midrule

\multicolumn{11}{c}{\textbf{HotpotQA}\enspace\textit{(in-distribution)}} \\
\multirow{5}{*}{Qwen3.5-4B}
& Full context
& 83.3 & 82.8 & 79.7 & 77.2 & 71.6 & 68.4 & -- & -- & -- \\

& MemAgent
& 83.1 & 83.0 & 81.8 & 80.6 & 79.2 & 75.7 & 74.5 & 71.9 & 78.7 \\

& GRU-Mem
& 84.4 & 83.5 & 83.8 & 80.9 & 81.4 & 78.2 & 77.0 & 75.4 & 80.6 \\

& ReMemR1$^\dagger$
& 86.2 & 84.2 & 83.3 & 81.9 & 81.3 & 79.8 & 78.5 & 74.3 & 81.2 \\

& \Block[fill=CoEMBlue]{1-10}{}\textbf{CoEM}
& \textbf{87.0} & \textbf{86.7} & \textbf{86.3} & \textbf{85.9} & \textbf{85.5} & \textbf{85.1} & \textbf{84.7} & \textbf{84.2} & \textbf{85.7} \\

\cmidrule(lr){1-11}
\multirow{5}{*}{Qwen3.5-9B}
& Full context
& 86.2 & 85.2 & 83.3 & 80.0 & 75.3 & 71.8 & -- & -- & -- \\

& MemAgent
& 87.8 & 87.2 & 84.5 & 83.4 & 82.1 & 79.0 & 78.0 & 75.2 & 82.2 \\

& GRU-Mem
& 88.8 & 86.2 & 85.4 & 85.0 & 83.9 & 82.9 & 81.0 & 78.6 & 84.0 \\

& ReMemR1$^\dagger$
& 88.3 & 87.0 & 86.7 & 84.9 & 84.7 & 83.3 & 80.3 & 78.7 & 84.2 \\

& \Block[fill=CoEMBlue]{1-10}{}\textbf{CoEM}
& \textbf{90.1} & \textbf{90.0} & \textbf{89.9} & \textbf{89.6} & \textbf{89.4} & \textbf{89.3} & \textbf{89.2} & \textbf{89.1} & \textbf{89.6} \\

\midrule

\multicolumn{11}{c}{\textbf{2WikiMultiHopQA}\enspace\textit{(out-of-distribution)}} \\
\multirow{5}{*}{Qwen3.5-4B}
& Full context
& 71.1 & 71.0 & 68.1 & 65.3 & 62.1 & 55.7 & -- & -- & -- \\

& MemAgent
& 72.6 & 71.2 & 69.4 & 65.9 & 63.1 & 60.8 & 58.2 & 54.6 & 64.5 \\

& GRU-Mem
& 73.9 & 72.7 & 72.8 & 70.9 & 68.3 & 65.5 & 64.4 & 61.1 & 68.7 \\

& ReMemR1$^\dagger$
& 76.5 & 75.6 & 72.9 & 71.9 & 71.3 & 67.8 & 66.8 & 64.4 & 70.9 \\

& \Block[fill=CoEMBlue]{1-10}{}\textbf{CoEM}
& \textbf{78.5} & \textbf{78.0} & \textbf{77.3} & \textbf{76.8} & \textbf{76.1} & \textbf{75.5} & \textbf{74.7} & \textbf{73.9} & \textbf{76.4} \\

\cmidrule(lr){1-11}
\multirow{5}{*}{Qwen3.5-9B}
& Full context
& 77.2 & 76.5 & 72.9 & 70.7 & 67.1 & 60.3 & -- & -- & -- \\

& MemAgent
& 77.7 & 76.6 & 73.8 & 72.4 & 68.5 & 65.1 & 62.9 & 59.6 & 69.6 \\

& GRU-Mem
& 79.5 & 78.0 & 77.2 & 75.9 & 74.4 & 71.6 & 69.9 & 66.2 & 74.1 \\

& ReMemR1$^\dagger$
& 80.9 & 80.3 & 79.0 & 76.3 & 74.5 & 73.0 & 70.8 & 69.6 & 75.6 \\

& \Block[fill=CoEMBlue]{1-10}{}\textbf{CoEM}
& \textbf{83.0} & \textbf{82.7} & \textbf{82.4} & \textbf{81.8} & \textbf{81.5} & \textbf{81.3} & \textbf{81.1} & \textbf{81.0} & \textbf{81.9} \\

\midrule

\multicolumn{11}{c}{\textbf{MuSiQue}\enspace\textit{(out-of-distribution)}} \\
\multirow{5}{*}{Qwen3.5-4B}
& Full context
& 63.7 & 62.8 & 61.5 & 58.7 & 52.5 & 47.5 & -- & -- & -- \\

& MemAgent
& 62.7 & 61.4 & 60.1 & 58.4 & 56.5 & 52.8 & 52.1 & 49.2 & 56.7 \\

& GRU-Mem
& 64.2 & 64.6 & 62.0 & 61.1 & 58.9 & 57.9 & 56.2 & 52.3 & 59.7 \\

& ReMemR1$^\dagger$
& 65.9 & 64.5 & 63.8 & 63.0 & 60.4 & 59.7 & 57.9 & 55.9 & 61.4 \\

& \Block[fill=CoEMBlue]{1-10}{}\textbf{CoEM}
& \textbf{69.1} & \textbf{68.7} & \textbf{68.1} & \textbf{67.5} & \textbf{66.8} & \textbf{66.1} & \textbf{65.3} & \textbf{64.5} & \textbf{67.0} \\

\cmidrule(lr){1-11}
\multirow{5}{*}{Qwen3.5-9B}
& Full context
& 70.9 & 68.6 & 67.0 & 64.4 & 58.8 & 52.3 & -- & -- & -- \\

& MemAgent
& 69.3 & 67.0 & 67.1 & 65.0 & 61.2 & 58.8 & 56.9 & 55.5 & 62.6 \\

& GRU-Mem
& 71.7 & 69.6 & 68.8 & 67.9 & 65.5 & 62.9 & 61.5 & 58.3 & 65.8 \\

& ReMemR1$^\dagger$
& 72.6 & 70.3 & 70.8 & 68.0 & 66.8 & 64.3 & 63.3 & 61.6 & 67.2 \\

& \Block[fill=CoEMBlue]{1-10}{}\textbf{CoEM}
& \textbf{74.4} & \textbf{74.2} & \textbf{74.0} & \textbf{73.7} & \textbf{73.6} & \textbf{73.3} & \textbf{73.0} & \textbf{72.3} & \textbf{73.6} \\

\bottomrule
\end{NiceTabular*}
\endgroup
\vspace{-5mm}
\end{table}

\vspace{-4mm}
\paragraph{Implementation Details.}
All recurrent memory agents process $C=5{,}000$-token chunks at each reading step and have the same $B=1{,}024$-token context-memory budget. \our allocates $B_P=256$ tokens to pending set $P$ and $B_M=768$ tokens to committed memory $M$, and uses a frozen DeBERTaV3-small NLI verifier with threshold $\eta=0.90$. All memory agents are trained on the same set of 32,768 HotpotQA samples, each paired with 200 documents, using eight rollouts per question. Both models are trained on eight NVIDIA H800 GPUs.
During evaluation, all memory agents use a 9,216-token serving window, with up to 8,192 input tokens and 1,024 output tokens, and the final answers are capped at 256 tokens. Further implementation details are provided in Appendices~\ref{app:implementation}
and~\ref{app:compute-resources}.

\subsection{Main Results}
\label{sec:main-results}
Table~\ref{tab:main} shows that \our consistently outperforms existing methods in all experimental settings. Its advantage becomes increasingly pronounced as the length of the input text increases. With the Qwen3.5-9B backbone, the margin over the strongest memory baseline increases from 1.3--2.1 F1 points at 50 documents to 10.4--11.4 F1 points at 6,400 documents. Across the range of 50 to 6,400 documents, \our's answer F1 declines by only 1.0--2.1 points, compared to a 9.6--11.3-point decline for ReMemR1. This widening gap suggests that preserving unresolved evidence and source-verified commitment become increasingly important as input length increases.

\our is also robust to long, distractor-heavy inputs. The performance of the genral-purpose LLMs decreases sharply as the document count grows, and inputs with more than 1,600 documents exceed their configured context window. At 1,600 documents, \our with the Qwen3.5-9B backbone outperforms the genral-purpose LLMs by 17.5 F1 points on HotpotQA and 21.0 F1 points on both 2WikiMultiHopQA and MuSiQue.
Moreover, the trained \our policy generalizes beyond its training distribution. Without additional training, \our performs best at every input length on 2WikiMultiHopQA and MuSiQue. The same trend holds for Qwen3.5-4B, suggesting that \our learns a generalizable policy for managing source evidence across datasets and model scales.

\subsection{More Analysis}
\label{sec:diagnostics}

\begin{figure}[t]
    \centering
    \includegraphics[width=0.9\linewidth]{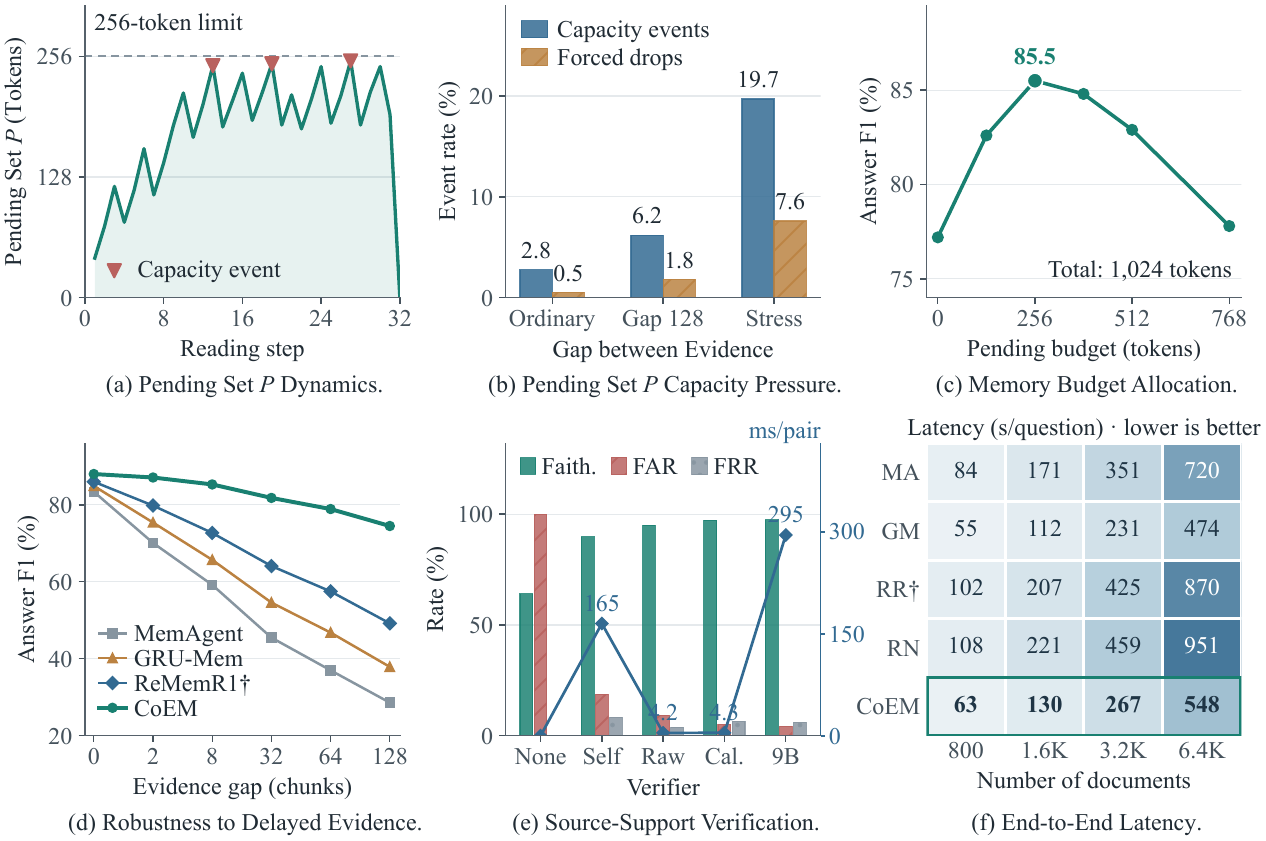}
    \vspace{-2mm}
    \caption{Analysis of evidence retention, source verification, and inference efficiency with Qwen3.5-4B.
    In (a), Capacity events occur when admitting new evidence would exceed the pending-set budget.
    In (b), Forced drops discard the oldest pending evidence to free space.
    In (e), Self is the 4B policy self-check, Raw/Cal. are the raw/calibrated DeBERTaV3-small verifiers, and 9B is Qwen3.5-9B; the blue line uses the right axis.
    In (f), MA, GM, RR$^\dagger$, and RN denote MemAgent, GRU-Mem, budget-matched ReMemR1, and native ReMemR1.
    }
    \label{fig:analysis-composite}
    \label{fig:pending-occupancy}
    \label{fig:retention-diagnostics}
    \vspace{-5mm}
\end{figure}

\paragraph{Pending Set Capacity Dynamics.}
The pending set reuses capacity as evidence is resolved.
We track Qwen3.5-4B on a 32-step 2WikiMultiHopQA episode at 1,600 documents with $B_P=256$ (Figure~\ref{fig:analysis-composite}(a)).
Occupancy remains within budget, with three capacity events and a terminal decrease to zero.
Admissions raise occupancy, while promotions and drops release space.
This trajectory illustrates how resolving pending evidence makes capacity available for later sources.

\vspace{-3mm}
\paragraph{Pending Set Capacity Pressure Analysis.}
Accumulating unresolved sources increases pending set pressure.
We test Qwen3.5-4B on 2WikiMultiHopQA at 6,400 documents with $B_P=256$ under ordinary ordering, a 128-chunk gap, and source-dense stress.
Forced drops per admitted pending record rise from 0.5\% to 1.8\% and 7.6\%, respectively (Figure~\ref{fig:analysis-composite}(b)).
These drops reflect competition for capacity before evidence is resolved.
Thus, the fixed budget limits retention when unresolved evidence accumulates (Appendix~\ref{app:retention-diagnostics}).

\vspace{-3mm}
\paragraph{Committed Memory and Pending Set Capacity Allocation.}
The allocation balances pending retention and committed-memory capacity.
We retrain \our with Qwen3.5-4B and varying $B_P$ at $B_P+B_M=1{,}024$ and evaluate HotpotQA at 800 documents.
Answer F1 peaks at 85.5 with $B_P=256$ (Figure~\ref{fig:analysis-composite}(c)).
Smaller allocations limit pending retention; larger ones reduce space for committed facts.
This trade-off supports reserving capacity for both pending evidence and committed memory.

\vspace{-3mm}
\paragraph{Robustness to Delayed Evidence.}
\our remains effective when source relevance emerges later.
We reorder identical passages for 128 HotpotQA questions at 6,400 documents, varying source-to-bridge gaps from 0 to 128 chunks with Qwen3.5-4B.
At 128 chunks, \our reaches 74.5 F1, 25.3 points above the strongest baseline (Figure~\ref{fig:analysis-composite}(d)).
Because available evidence is unchanged, the increasing F1 advantage supports retaining unresolved sources until later context clarifies their relevance.

\vspace{-3mm}
\paragraph{Source-Support Verification Analysis.}
Lightweight source verification balances faithfulness and cost.
We compare verifiers on the same 2,000 blinded promotion attempts, timing each source--fact pair on one H800 at batch size one.
Calibrated DeBERTaV3-small achieves 97.2\% faithfulness at 4.3\,ms/pair, versus Qwen3.5-9B's 97.5\% at 295.0\,ms/pair (Figure~\ref{fig:analysis-composite}(e)).
Calibration reduces false acceptance but increases false rejection.
We adopt the calibrated verifier for similar source support at approximately $69\times$ lower per-pair latency.

\vspace{-3mm}
\paragraph{Computational Efficiency of \our.}
\our trades additional computation for higher answer F1 than the fastest baseline.
We time Qwen3.5-4B on HotpotQA using one H800, complete scans, and matched decoding limits.
At 6,400 documents, \our takes 548\,s/question, 24--42\% less than MemAgent and both ReMemR1 variants (Figure~\ref{fig:analysis-composite}(f)).
It is 15.6\% slower than GRU-Mem but gains 8.8 F1 points (Table~\ref{tab:main}).
These results quantify the cost of improved answer quality.

\subsection{Ablation Analysis}
\label{sec:ablation}

\begin{wraptable}{r}{0.45\linewidth}
\vspace{-7mm}
\centering
\caption{Ablation analysis of \our. Faith. is the percentage of accepted facts supported by their cited sources.}
\label{tab:ablation}
\coemprosefalse
{\fontsize{8}{9.2}\selectfont
\setlength{\tabcolsep}{3.4pt}
\renewcommand{\arraystretch}{1.02}
\begin{tabular}{@{}lrrrr@{}}
\toprule
Variant
& \multicolumn{2}{c}{HotpotQA}
& \multicolumn{2}{c}{2Wiki.} \\
\cmidrule(lr){2-3}\cmidrule(lr){4-5}
& F1 $\uparrow$ & Faith. $\uparrow$
& F1 $\uparrow$ & Faith. $\uparrow$ \\
\midrule
No RL                 & 67.5 & 93.2 & 55.1 & 92.7 \\
Eager, no verifier    & 73.4 & 85.2 & 61.8 & 84.7 \\
Eager + verifier      & 77.2 & 95.7 & 66.0 & 95.2 \\
Pending, no verifier  & 81.5 & 86.7 & 71.3 & 86.2 \\
Outcome-only RL       & 80.4 & 94.4 & 69.9 & 93.9 \\
Fixed-step pending    & 82.3 & 96.2 & 72.2 & 95.7 \\
\midrule
\textbf{CoEM} & \textbf{85.5} & \textbf{97.2} & \textbf{76.1} & \textbf{96.7} \\
\bottomrule
\end{tabular}
\par}
\vspace{-10pt}
\end{wraptable}

Table~\ref{tab:ablation} ablates \our's components using Qwen3.5-4B at 800 documents.
\textbf{No RL} suffers a severe 18.0/21.0 F1 drop, validating the necessity of learned memory decisions.
\textbf{Eager, no verifier} disables pending retention and verification, losing 12.1/14.3 F1 points.
Adding verification (\textbf{Eager + verifier}) recovers 3.8/4.2 F1 points, yet remains 8.3/10.1 points below \our, confirming the advantage of delayed commitment.\textbf{Pending, no verifier} drops 4.0/4.8 F1 points and reduces faithfulness by 10.5 percentage points, highlighting the verifier's role in memory quality. 
Removing step-level evidence rewards (\textbf{Outcome-only RL}) decreases F1 by 5.1/6.2 points. Finally, replacing the learned policy with a rigid two-chunk delay (\textbf{Fixed-step pending}) loses 3.2/3.9 F1 points, favoring adaptive over fixed commitment timing. These results support the complementary contributions of adaptive pending retention, source verification, and step-level evidence rewards, with full \our achieving the best results among the evaluated variants.

\section{Conclusion}
We presented \our{}, a recurrent memory agent for long-context reasoning that addresses delayed evidence relevance, where information that appears unimportant when first observed may become critical only after later context arrives. By separating whether evidence is supported from whether it is relevant, \our{} keeps unresolved excerpts verbatim in a bounded pending set until later context clarifies them, and a frozen NLI verifier admits only source-supported facts. Across three benchmarks and two backbones, \our{} consistently outperforms budget-matched memory agents, with gains that widen as context grows: it leads by 10.4--11.4 F1 points at 6,400 documents with Qwen3.5-9B and by 25.3 points at a 128-chunk evidence gap with Qwen3.5-4B. 
These results suggest that long-context memory should decide not only what to remember, but also when to compress it. Extending this principle to code and scientific documents is a natural next step.

\label{lastmainpage}
\newpage

\section*{Acknowledgments}
This work was supported by the National Research Foundation of Korea (NRF) grant funded by the Korean government (MSIT) under the project ``Development of Risk-Enhanced Continual Learning for Embodied Intelligence in Long-Tail Environments'' (Grant No. RS-2026-25595451) and by the Korea Institute of Science and Technology Information (KISTI) R\&D program through the joint research project ``Development of the Next-Generation Integrated Wired/Wireless Communication Gateway (X-Gateway).''

\section*{AI use statement}
AI tools were used for (i) writing assistance, including manuscript revision, language polishing, and translation; (ii) retrieval and discovery, including literature searches and identification of related work; and (iii) research ideation and execution, including method and experiment design, mathematical analysis, code preparation, and figure generation. The authors reviewed the AI-assisted content, checked cited sources against the original publications, verified mathematical derivations, ran code tests, and cross-checked the reported experimental results and figures. The authors take full responsibility for the final manuscript and accompanying artifacts, including the citations, theoretical claims, implementation, experimental results, and figures.

\section*{Ethics statement}
This study evaluates long-context question answering on public benchmarks (HotpotQA, 2WikiMultiHopQA, and MuSiQue) using published pretrained model components. The system may inherit biases and errors from the datasets, source passages, and models. The frozen verifier assesses whether a proposed fact is supported by the supplied passages; source support does not establish truth beyond those passages or guarantee a correct final answer. Verifier errors and evidence loss under finite memory remain possible.

\section*{Reproducibility statement}
Section~3 specifies the memory controller, token budgets, verification gate, and optimization objective, and Section~4 describes the experimental setup and comparisons. Appendix~A details data construction, training and evaluation settings, prompt templates, verifier calibration and auditing, and additional results. Appendix~B reports computational resources, Appendix~D states the theoretical assumptions and proofs, and Appendix~E specifies controller subroutines and the training objective. The accompanying source code includes the controller, training and evaluation scripts, data-construction utilities, verifier calibration and audit tools, dependency specifications, and unit tests. Code repository: \url{https://github.com/benmagnifico/CoEM}. We will release trained model weights and evaluation manifests upon acceptance.

\bibliography{iclr2027_conference}
\bibliographystyle{iclr2027_conference}
\clearpage
\appendix
\raggedbottom
\section{Experimental details and additional results}
\label{app:experiment-details}
This appendix describes dataset construction, implementation settings, verifier calibration, and additional evaluations of source retention, inference efficiency, and policy training.

\subsection{Dataset construction and evaluation}
\label{app:data-construction}
\paragraph{Questions and length cohorts.}
Policy training uses 32,768 HotpotQA training questions and a disjoint development set of 512 questions.
Following ReMemR1~\citep{shi2025rememr1}, we sample 128 evaluation questions from each official development split with seed 4 and reuse these question IDs across lengths.
The HotpotQA cohort contains hard questions; 2Wiki retains all evidence documents, and MuSiQue contains answerable questions with complete annotated support.
A question is eligible only if its full supporting text fits 4,096 tokens under both policy tokenizers and its question fits $L_q=256$ tokens.
Duplicate IDs and incomplete annotations are excluded before sampling.
The same filtering rule applies to every method, and fixed question IDs allow paired comparisons across lengths.

\paragraph{Document construction.}
For each dataset, we construct a distractor pool from title--paragraph pairs in the training split and remove duplicates using normalized titles and text hashes.
Supporting paragraphs are retained verbatim, including their titles.
For each distractor, we retain the longest complete-sentence prefix containing 96--128 serialized tokens under both tokenizers and skip paragraphs without an eligible prefix.
For question $q$, we exclude distractors with titles matching the supporting documents or text that is nearly identical to the supporting passages.
A stable hash of the question ID and seed 4 determines a permutation without replacement.
We add the first $N_{\rm doc}-|\mathcal S_q|$ distractors, where $\mathcal S_q$ is the support-document set and
$N_{\rm doc}\in\{50,100,200,400,800,1600,3200,6400\}$.
Independent stable keys determine the order of the selected supporting and distractor documents.
Sorting by these keys makes each shorter context a subsequence of every longer context, preserving document order and the exact supporting text.
Training uses the same construction with $N_{\rm doc}=200$ and the training seed.

\paragraph{Serialization and stream boundaries.}
Each document is serialized as \texttt{[DOC id] title: text}, with the question kept outside the document stream.
Consecutive chunks contain at most $C=5{,}000$ tokens without overlap.
The source manifest retains a stable ID for any document that crosses a chunk boundary.
The controller identifies source spans by $\rho=(t,l,r)$, where $[l,r)$ gives token offsets within a chunk.
The manifest maps these coordinates to absolute document offsets for auditing; this mapping does not give the policy access to discarded history.
The source manifest stores the tokenizer revision, total tokens, chunk boundaries, support-document positions, and span offsets.

We control context length by document count, so a given count does not imply a fixed token length.
At 6,400 documents, the distractor prefixes alone contribute approximately 614,000--819,000 tokens; exact lengths depend on the unmodified support passages and headers.
The full-context baseline uses its 262,144-token configured window, including its answer reserve.
We mark an input as unavailable if the complete serialized prompt exceeds this window, without truncating supporting evidence.
The recurrent methods read all chunks under the same context-memory budget.

\paragraph{Answer scoring and uncertainty.}
Before scoring, we lowercase answers, remove punctuation and English articles, and normalize whitespace.
Let $Y_q$ and $\widehat Y_q$ denote the normalized reference and prediction token multisets, and let $c_q$ be their multiset overlap.
For nonempty multisets, $p_q=c_q/|\widehat Y_q|$ and $r_q=c_q/|Y_q|$; an empty or invalid prediction scores zero.
The answer score is
\begin{equation}
 F_1(q)=\begin{cases}2p_qr_q/(p_q+r_q),&p_q+r_q>0,\\0,&\text{otherwise}.\end{cases}
\end{equation}
When several reference answers are acceptable, we use the highest score across references.
If either normalized answer is \texttt{yes}, \texttt{no}, or \texttt{noanswer}, the complete normalized strings must match exactly to receive credit.
We report F1 as 100 times the mean score over questions and record exact match separately.
Eight-length averages give equal weight to each length, and dataset macro-averages give equal weight to each dataset.
The full-context baseline is excluded from all-length averages when lengths are unavailable.
Training seeds are 17, 29, and 43; confidence intervals use 10,000 bootstrap resamples of paired questions.

\subsection{Implementation and matched comparisons}
\label{app:implementation}
\paragraph{Policy and controller configuration.}
Both policies use the official text chat template with reasoning mode disabled and bfloat16 weights and activations.
Each input allows 5,000 chunk tokens, 1,024 context-memory tokens, and 256 question tokens.
The remaining 1,912 tokens cover fixed instructions, the action format, bounded controller feedback, and attempt descriptors, giving an input limit of 8,192 tokens.
Each controller call generates at most $W_\pi=1{,}024$ tokens, and final-answer generation uses $W_{\rm ans}=256$.
The same limits apply to all matched controls.
The serving configuration sets \texttt{max\_model\_len}=9{,}216, covering the full 8,192-token input plus the largest 1,024-token output.
We count fixed instruction tokens at setup and require instructions and temporary feedback to fit within the 1,912-token reserve.
Source text is not silently shortened to accommodate them.
Source identifiers, admission ranks, and separators count toward the budget of the memory component that contains them.

The limit $H=8$ counts all fresh source specifications per chunk, including invalid candidates and immediate promotions.
Each capacity event allows $K=1$ reconsideration pass.
Each participating source record permits at most $J=2$ promotion attempts per chunk. Each proposed fact charges one attempt to every source record it cites, shared across normal calls, capacity handling, and final resolution.
Pending records do not expire with age.
When capacity requires eviction after the bounded reconsideration pass, the controller selects the record with the lowest current admission-order rank.
At the final step, all remaining pending records must be committed or dropped, leaving $P$ empty before answering; we refer to this as final resolution.
The verifier window is $W_V=512$ tokens under its own tokenizer, including the complete premise--claim pair and special tokens.
Table~\ref{tab:hyperparameters} summarizes the policy, controller, training, and evaluation settings.

\begin{table}[t]
\centering
\caption{Policy, controller, training, and evaluation settings.}
\label{tab:hyperparameters}
\small
\begin{tabular}{ll}
\toprule
Setting & Value \\
\midrule
Policy / verifier precision & bfloat16 / float32 logits for calibration \\
Chunk / total context memory & $C=5{,}000$ / $B=1{,}024$ tokens \\
Pending / committed allocations & $B_P=256$ / $B_M=768$ tokens \\
Question / instruction and feedback reserve & $L_q=256$ / 1,912 tokens \\
Input / controller output / final output & 8,192 / $W_\pi=1{,}024$ / $W_{\rm ans}=256$ tokens \\
Total serving window & 9,216 tokens (input plus output) \\
Candidate / reconsideration / attempt caps & $H=8$ / $K=1$ / $J=2$ \\
Verifier window / threshold & $W_V=512$ / $\eta=0.90$ \\
Training questions / documents each & 32,768 / 200 \\
Development / evaluation questions & 512 / 128 per dataset per length \\
Training / evaluation data seeds & 17, 29, 43 / 4 \\
Questions per rollout batch / rollouts & 128 / $G=8$ per question \\
Optimizer updates / PPO epochs per batch & 500 / 1 \\
PPO minibatch / microbatch & 8 trajectories / 1 trajectory per GPU \\
AdamW learning rate / $(\beta_1,\beta_2)$ / $\epsilon_{\rm opt}$ & $10^{-6}$ / $(0.9,0.999)$ / $10^{-8}$ \\
Weight decay / gradient norm cap & 0.01 / 1.0 \\
Warmup / schedule after warmup & 20 updates / constant \\
PPO clip / reference KL coefficient & $\epsilon=0.2$ / $\beta=10^{-3}$ \\
Outcome weight / rejection / invalid penalties & $\alpha=0.8$ / $\lambda_v=0.20$ / $\lambda_f=0.05$ \\
Return discount / advantage standardization & $\gamma=1$ / none (group centering only) \\
Rollout temperature / top-$p$ / top-$k$ & 1.0 / 1.0 / disabled \\
Evaluation decoding / repetition penalty & greedy / 1.0 \\
Checkpoint interval / selection metric & 25 updates / development answer F1 \\
Evaluation bootstrap / resamples & paired questions / 10,000 \\
\bottomrule
\end{tabular}
\end{table}

\paragraph{Training and serving configuration.}
All trainable comparisons use the same policy initialization, question batches, and 500-update limit.
Training uses full-parameter FSDP, gradient checkpointing, and gradient accumulation to reach the stated minibatch size.
We use a frozen reference policy and one PPO epoch per sampled batch.
Truncated generations that fail to complete the required action format count as invalid actions and receive the format penalty.
Their generated tokens remain in the policy objective.
Policy updates use the normalized generated-token objective in Section~\ref{sec:method}, with no critic and no reward bonus for accepted writes.
Checkpoint selection evaluates the same 512 development questions every 25 updates and chooses the earlier checkpoint when scores are tied.

The implementation uses PyTorch, Transformers, a recurrent extension of the verl trainer, and vLLM serving.
Both Qwen3.5-4B and Qwen3.5-9B are trained on one node with eight NVIDIA H800 GPUs. Evaluation uses NVIDIA H800 and A100 GPUs, with a single GPU type within each synchronous group.
Serving uses tensor-parallel sizes of 1 for Qwen3.5-4B and 2 for Qwen3.5-9B.
Maximum GPU-memory utilization is 0.80, and the scheduler token budget is 32,768.
The full-context baseline uses tensor parallelism up to 8 to fit the complete prompt.
We report its latency separately from the single-GPU recurrent timing study.

\paragraph{Memory-agent baselines.}
MemAgent receives all 1,024 retained tokens as textual context memory.
GRU-Mem retains its learned update action, with early exit disabled to separate memory selection from the amount of evidence observed.
The budget-matched ReMemR1 adaptation allocates 480 tokens to current memory, 480 to one complete historical snapshot, and 64 to the callback query.
These allocations total 1,024 serialized tokens.
Replacing the archived snapshot discards the older snapshot permanently.

Native ReMemR1 keeps all historical memory snapshots outside this allowance and uses a 1,024-token current memory and a 64-token query.
We record archive bytes and recalled prompt tokens separately.
Callbacks access retained summaries but cannot recover discarded source text that those summaries omit.
All comparisons use Qwen3.5-4B and Qwen3.5-9B.
Table~\ref{tab:native-archive} reports this comparison separately because the methods do not share the same storage budget.

\begin{table}[t]
\centering
\caption{Length-averaged F1 (\%). Native ReMemR1 retains historical snapshots outside the 1,024-token budget and is reported separately from the matched-storage comparison.}
\label{tab:native-archive}
\small
\setlength{\tabcolsep}{4pt}
\begin{tabular}{llrrr}
\toprule
Model & Dataset & ReMemR1$^\dagger$ & ReMemR1 (native) & CoEM \\
\midrule
Qwen3.5-4B & HotpotQA & 81.2 & 83.4 & 85.7 \\
Qwen3.5-4B & 2Wiki & 70.9 & 73.0 & 76.4 \\
Qwen3.5-4B & MuSiQue & 61.4 & 63.3 & 67.0 \\
Qwen3.5-9B & HotpotQA & 84.2 & 86.4 & 89.6 \\
Qwen3.5-9B & 2Wiki & 75.6 & 78.0 & 81.9 \\
Qwen3.5-9B & MuSiQue & 67.2 & 69.3 & 73.6 \\
\bottomrule
\end{tabular}
\end{table}

\paragraph{Ablation definitions.}
Removing pending retention sets $B_P=0$ and $B_M=1{,}024$, so source detail must be committed or discarded in its arrival chunk.
Removing verification bypasses only the entailment check and sets $\lambda_v=0$; source matching, operation validity, and memory budgets remain enforced.
We retrain each of the four pending-retention and verification combinations independently under matched settings.
Outcome-only RL sets $\alpha=1$ and both local penalties to zero.
The no-RL variant uses the initialized policy with the same inference interface.
The fixed-step pending variant attempts promotion exactly two chunks after admission, or during final resolution if it occurs earlier.
It drops rejected or capacity-infeasible records.
These controls separately test source retention, support verification, and the learned timing of commitment.

\subsection{Prompt templates and action format}
\label{app:prompt-templates}

\begingroup
We use the following prompt templates for reading, capacity handling, final resolution, and final-answer generation.
Braced uppercase fields are filled by the controller before a call; they are never instructions to fetch historical text.
The controller supplies the question, current chunk, and serialized $M$ and $P$ using the backbone's chat template.
Call-level instructions, limits, and bounded feedback are included in the 1,912-token instruction-and-feedback reserve.
Committed-entry IDs, field labels, and delimiters within the displayed committed memory count toward $B_M$; pending-source metadata, admission ranks, and their field labels and delimiters count toward $B_P$.
If a filled instruction block exceeds this reserve, the configuration fails validation before evaluation; source text is not truncated to make it fit.

\paragraph{Reading-policy system prompt.}
\begin{quote}\small
You read a document stream to answer one question. You may use only the
question, the current chunk, committed memory, and pending excerpts shown
in this call. Source coordinates are provenance, not retrieval commands.
Do not infer that discarded text remains available.

Return one JSON object matching the action format below, with no prose
or Markdown. Select at most H fresh source spans from the current chunk.
Use the displayed tokenizer coordinates [chunk, start, end), with an
exclusive end offset. Never generate replacement source quotations.

For each selected or pending excerpt, choose Promote, Keep, or Drop.
Keep preserves the exact excerpt. Drop removes it. Promote proposes
compact facts supported by available excerpts. Write each independently
checkable relation in its own fact field and cite all spans needed to
support that relation. Committed facts may guide relevance decisions,
but are not source evidence for a new or rewritten fact.

A promotion may remove whole committed entries and insert new facts in
one atomic transaction. Every inserted fact must pass verification and
the resulting serialized memory must fit its budget. Do not assume a
proposed transaction has succeeded. On rejection, the existing memory
and supporting pending excerpts are preserved, subject to forced removal.
Each fact consumes one promotion attempt from every source record it
uses; obey the remaining attempt counts supplied in this call.

Successful promotion of a pending target must consume that target.
If several facts need the same excerpt, include them in one transaction
before releasing it. Preserve unrelated pending sources unless you
explicitly consume or drop them. Do not include future plans, hidden
scratchpads, or unsupported answers in either memory store.

Obey MODE and the supplied target restrictions. During forced resolution
or final resolution, Keep is unavailable for the designated record.
\end{quote}

\paragraph{Reading-policy user prompt.}
\begin{quote}\small\ttfamily
MODE: \{MODE\}\\
QUESTION: \{QUESTION\}\\
CHUNK INDEX: \{CHUNK\}\quad FINAL CHUNK: \{IS\_FINAL\}\\
BUDGETS: M=\{B\_M\}, P=\{B\_P\}, total=\{B\}\quad
LIMITS: H=\{H\}, K=\{K\}, J=\{J\}\\
CURRENT CHUNK WITH TOKEN COORDINATES:\\
\{CURRENT\_CHUNK\}\par
COMMITTED MEMORY (call-local IDs, fact, provenance):\\
\{COMMITTED\_MEMORY\}\par
PENDING RECORDS (source span, verbatim text, admission rank):\\
\{PENDING\_RECORDS\}\par
REMAINING PROMOTION ATTEMPTS: \{ATTEMPT\_COUNTS\}\\
FORCED TARGET, IF ANY: \{FORCED\_TARGET\}\\
CURRENT CAPACITY REQUIREMENT: \{REQUIRED\_SPACE\}\\
FEEDBACK FROM THIS CHUNK: \{BOUNDED\_FEEDBACK\}\\
Return the next JSON action object.
\end{quote}

\paragraph{Mode-specific restrictions.}
In \texttt{read} mode, fresh candidates and actions on pending records are allowed.
In \texttt{reconsider} mode, the controller requests another decision on the existing pending records to accommodate an already validated candidate.
In \texttt{forced} mode, \texttt{pending\_actions} contains exactly one Promote or Drop operation for the designated oldest record.
In \texttt{terminal} mode, the same restriction applies to the next record in admission order until $P$ is empty.
The \texttt{candidates} array must be empty in \texttt{reconsider}, \texttt{forced}, and \texttt{terminal} modes; these calls cannot admit additional sources.
The controller, not the model, determines whether another call is permitted under $K$ and $J$.
An invalid forced response uses the fallback specified in Appendix~\ref{sec:algorithm-details}.

\paragraph{Action format.}
The following typed grammar defines the complete response structure.
Square-bracketed type names denote arrays; literal strings are case sensitive.
\texttt{Span} always contains three integers, and \texttt{Transaction} is required exactly when \texttt{action} is \texttt{"Promote"}.
Keep and Drop objects have no transaction field.
Unknown fields, repeated fresh spans, out-of-range offsets, unavailable sources, and references to missing committed entries are rejected as invalid operations.
\begin{quote}\footnotesize
\begin{minipage}{\linewidth}
\begin{verbatim}
Span        := [chunk_index, token_start, token_end]
Fact        := {"fact": string, "sources": [Span, ...]}
Transaction := {"remove": [memory_id, ...],
                "insert": [Fact, ...],
                "consume": [Span, ...]}
Candidate   := {"span": Span,
                "action": "Promote" | "Keep" | "Drop",
                "transaction": Transaction}
PendingAction := {"target": Span,
                  "action": "Promote" | "Keep" | "Drop",
                  "transaction": Transaction}
Response    := {"candidates": [Candidate, ...],
                "pending_actions": [PendingAction, ...]}
\end{verbatim}
\end{minipage}
\end{quote}
The arrays \texttt{remove} and \texttt{consume} may be empty; \texttt{insert} and each fact's \texttt{sources} are nonempty.
The full target span identifies a pending record; a cited subspan is legal only when its complete text remains available within a retained record or the current chunk.
The \texttt{consume} list names complete pending records, and includes the target of a successful pending promotion.
At least one inserted fact must cite the target's retained source; a transaction cannot count as promotion of a target that supports none of its new facts.
Committed-memory IDs are assigned to the entries displayed at the start of a call.
They identify those entries throughout that call, even if earlier actions remove other entries; they are regenerated at the next call and do not carry additional historical information.
Serialized committed memory remains a list of $(f,\rho)$ entries.

The controller validates fresh spans before executing any action.
It then executes fresh immediate promotions in candidate order and operations on existing pending records in \texttt{pending\_actions} order, followed by admission of fresh Keep candidates in candidate order.
A fresh Drop candidate is discarded after validation.
Each operation is checked against the state left by preceding operations, so a source consumed earlier in the call cannot support a later promotion.
A failed fresh promotion does not itself create a pending record; any later use of its source remains subject to current-chunk availability and the same attempt budget.
The controller enforces $H$ across all fresh specifications for the chunk and $J$ across all participating source records and all calls in that chunk.

\paragraph{Two-fact transaction example.}
The example record $[3,20,50)$ contains the source excerpt
\emph{``Rina was born in Harbor City. Rina founded Delta Lab.''}
Source coordinates specify the chunk and token offsets stored in the source manifest.
With two promotion attempts remaining for this record, the response proposes both facts in one atomic transaction:
\begin{quote}\footnotesize
\begin{minipage}{\linewidth}
\begin{verbatim}
{
  "candidates": [],
  "pending_actions": [{
    "target": [3, 20, 50],
    "action": "Promote",
    "transaction": {
      "remove": [],
      "insert": [
        {"fact": "Rina was born in Harbor City.",
         "sources": [[3, 20, 50]]},
        {"fact": "Rina founded Delta Lab.",
         "sources": [[3, 20, 50]]}
      ],
      "consume": [[3, 20, 50]]
    }
  }]
}
\end{verbatim}
\end{minipage}
\end{quote}
The controller first validates both source--fact pairs, checks both attempt charges and the proposed memory size, and obtains both verifier decisions.
Only if all checks pass does it insert both entries and release the pending record.
If either claim is rejected, neither insertion nor removal is applied; during ordinary reading the excerpt remains available.
This ordering prevents the first accepted fact from releasing the only source needed by the second.

\paragraph{Verifier inputs.}
For the DeBERTaV3-small verifier, the controller uses the checkpoint tokenizer's paired-input interface with $S_e$ as the premise and $f$ as the hypothesis, including its special tokens.
It concatenates multiple cited excerpts in citation order with source-boundary separators, checks the complete pair against $W_V$, and computes the calibrated entailment probability.
No natural-language instruction is prepended to this classifier input.
For the generative-verifier comparison, the separate prompt is:
\begin{quote}\small
Decide whether the PREMISE supports the complete CLAIM. Use only the
premise. Return exactly one label: entailment, contradiction, or neutral.
Choose neutral if the premise does not establish every part of the claim.
Do not answer the original question and do not use outside knowledge.\par
PREMISE: \{COMPLETE\_SOURCE\_PREMISE\}\par
CLAIM: \{PROPOSED\_FACT\}
\end{quote}

\paragraph{Final-answer template.}
After final resolution, the answer call receives the following system message and the question plus committed memory only:
\begin{quote}\small
Answer the question using the committed facts supplied below. Return the
shortest answer phrase that fully answers the question, without reasoning,
citations, JSON, or additional commentary. If the committed facts do not
support an answer, return exactly ``unknown''. Do not reconstruct source
text from provenance identifiers.\par
QUESTION: \{QUESTION\}\par
COMMITTED FACTS: \{FINAL\_COMMITTED\_MEMORY\}
\end{quote}
The answer uses greedy decoding and at most $W_{\rm ans}=256$ generated tokens.
The current chunk, pending records, earlier action text, and verifier feedback are not included in this final call.
\endgroup

\subsection{Verifier calibration and independent audit}
\label{app:verifier-details}
\label{sec:verifier-details}
\paragraph{Checkpoint and evidence pairs.}
The verifier uses a DeBERTaV3-small NLI model~\citep{he2023debertav3} from the public \texttt{cross-encoder/}\allowbreak\texttt{nli-deberta-v3-small} checkpoint~\citep{crossencoder2024nlideberta}.
This checkpoint was trained on SNLI~\citep{bowman2015snli} and MultiNLI~\citep{williams2018mnli}.
The logits follow the order contradiction, entailment, and neutral, and the classifier remains frozen throughout policy training and inference.
Each premise concatenates the complete source spans cited by one proposed fact, with source IDs as separators; the fact is the claim assessed by NLI.
The spans must still be visible in the current chunk or pending set.
For a claim citing several sources, the verifier checks the joint premise rather than source identifiers or compressed memory alone.
Pairs that exceed the verifier window are rejected without truncating evidence.
Within the same attempt cap, the policy may then propose a shorter claim that can be verified separately.

\paragraph{Calibration protocol.}
We use three question-disjoint sets of 1,000 source--claim pairs each for temperature fitting, threshold selection, and held-out calibration evaluation.
These sets are disjoint from policy training, checkpoint selection, final evaluation, and audit questions.
Pairs include faithful compression, entity substitution, reversed relations, and unsupported additions.
Two annotators independently label each pair and adjudicate disagreements to provide the source-support reference.
For frozen logits $z_\phi(S_e,f)$ and NLI labels $y_i$, we fit one positive temperature:
\begin{equation}
\widehat T_{\rm cal}=\arg\min_{T_{\rm cal}>0}\sum_{i\in\mathcal C_{\rm fit}}
-\log\!\left[\operatorname{softmax}\!\left(z_\phi(S_{e_i},f_i)/T_{\rm cal}\right)\right]_{y_i}.
\end{equation}
The initialization is $T_{\rm cal}=1$; we optimize its logarithm using L-BFGS for at most 50 iterations with tolerance $10^{-6}$.
We select $\eta$ from $\{0.70,0.75,0.80,0.85,0.90,0.95\}$ to maximize supported-claim recall while keeping false acceptance at or below 5\% on the threshold-selection split.
Ties are resolved in favor of the larger threshold.
The selected threshold is $\eta=0.90$.
Table~\ref{tab:calibration} reports ten-bin ECE, three-class negative log likelihood, and binary entailment Brier score.
Temperature scaling follows~\citet{guo2017calibration}.

\begin{table}[t]
\centering
\caption{Held-out calibration metrics. ECE uses ten equal-width bins; lower values are better.}
\label{tab:calibration}
\small
\setlength{\tabcolsep}{4pt}
\begin{tabular}{lrrr}
\toprule
Configuration & ECE (\%) $\downarrow$ & NLL $\downarrow$ & Brier ($\times100$) $\downarrow$ \\
\midrule
Raw & 7.10 & 0.52 & 8.00 \\
Temperature scaling & 3.20 & 0.49 & 7.40 \\
\bottomrule
\end{tabular}
\end{table}

\paragraph{Blinded audit and metric definitions.}
The audit uses 2,000 attempted promotions from disjoint question IDs, collected before verification.
Every verifier therefore receives the same set of supported and unsupported proposals.
Two human annotators independently assess whether the displayed premise supports each claim and adjudicate disagreements to establish reference labels.
A separate \texttt{FacebookAI/roberta-large-mnli} classifier~\citep{liu2019roberta,facebook2026robertamnli} serves as an auxiliary judge rather than the reference.
For true positives TP, false positives FP, true negatives TN, and false negatives FN,
\begin{equation}
\mathrm{Faithfulness}=\frac{\mathrm{TP}}{\mathrm{TP}+\mathrm{FP}},\qquad
\mathrm{FAR}=\frac{\mathrm{FP}}{\mathrm{FP}+\mathrm{TN}},\qquad
\mathrm{FRR}=\frac{\mathrm{FN}}{\mathrm{TP}+\mathrm{FN}}.
\end{equation}
An accepted claim is a positive prediction, and an entailed claim is a positive reference.
Metrics with zero denominators are undefined.
The ablation faithfulness metric uses a separate blinded sample of accepted entries from each variant.

\begin{table}[t]
\centering
\caption{
Independent audit of source-support verifiers on the same blinded set of 2,000 pre-verification promotion attempts. Faithfulness measures source support among accepted claims; FAR and FRR measure unsupported claims accepted and supported claims rejected, respectively. Latency is measured per source--claim pair on one H800 with batch size one.
}
\label{tab:verifier-audit}
\vspace{4pt}
\begin{tabular}{lrrrr}
\toprule
Verifier & Faith. $\uparrow$ & FAR $\downarrow$ & FRR $\downarrow$ & ms/pair $\downarrow$ \\
\midrule
No verification                & 64.0 & 100.0 & 0.0 &   0.0 \\
Policy self-check (Qwen3.5-4B) & 89.8 &  18.5 & 8.3 & 165.0 \\
DeBERTaV3-small, raw             & 94.9 &   9.2 & 3.9 &   4.2 \\
DeBERTaV3-small, calibrated      & 97.2 &   4.9 & 6.4 &   4.3 \\
Qwen3.5-9B verifier            & 97.5 &   4.3 & 5.8 & 295.0 \\
\bottomrule
\end{tabular}
\end{table}

\paragraph{Small and large verifier comparison.}
All verifiers receive the same complete premise--claim pairs.
The policy self-check and Qwen3.5-9B generative verifiers each emit one NLI label using greedy decoding and an eight-token output cap.
Their calibration uses disjoint fitting data.
We measure per-pair latency on one H800 with batch size one, ten warm-up pairs, and device synchronization, and report resident memory separately.

The small verifier takes 4.3\,ms/pair, compared with 295\,ms/pair for Qwen3.5-9B, with faithfulness of 97.2\% and 97.5\%, respectively.
The small verifier therefore achieves similar source faithfulness on these short pairs at substantially lower latency.
Calibration reduces false acceptance from 9.2\% to 4.9\% while increasing false rejection from 3.9\% to 6.4\%.

\subsection{Evidence order, context-memory occupancy, and capacity pressure}
\label{app:retention-diagnostics}
\label{sec:diagnostics-extra}
\paragraph{Budget and source-to-bridge gap interventions.}
The budget sweep varies $B_P\in\{0,\allowbreak 128,\allowbreak 256,\allowbreak 384,\allowbreak 512,\allowbreak 768\}$ with $B_M=1{,}024-B_P$ and matched retraining.
The zero- and 256-token rows coincide with their verifier-enabled ablations.

The source-to-bridge gap probe tests how long unresolved evidence must be retained.
Annotated supporting relations identify an intermediate entity and a later bridge that connects it to the question.
We select 128 HotpotQA questions after checking that each 6,400-document stream contains at least 130 chunks under both tokenizers.
All methods use the same question and document IDs at gaps of 0, 2, 8, 32, 64, or 128 chunks.
Only passage order changes across gap settings.
Infeasible candidates are replaced from the remaining eligible pool before model evaluation.
The intervention thus varies the delay before the policy can connect the retained evidence to the question.

\paragraph{Natural-order frequency.}
We identify delayed evidence when a source relation precedes the bridge linking its intermediate entity to the question, with at least one intervening chunk boundary.
Two annotators trace the annotated evidence dependencies in randomly selected cohorts from HotpotQA, 2WikiMultiHopQA, and MuSiQue.
Each cohort contains 128 questions with 800 documents per question.
Across these benchmark samples, 32.03\%--44.53\% of questions contain evidence whose relevance becomes apparent only later.
These frequencies measure how often delayed relevance occurs under natural document order, complementing the controlled gap intervention.

\paragraph{Supporting-relation coverage.}
For the 6,400-document cohort, we record committed and pending token occupancy and recoverable supporting relations over the first 32 nonterminal transitions.
For already-seen annotated supporting relations $\mathcal E_t$, the coverage metric is
\begin{equation}
\mathrm{Coverage}_t=\frac{\sum_{e\in\mathcal E_t}\mathbf 1\{e\text{ is recoverable from the retained context at }t\}}
{|\mathcal E_t|},\qquad |\mathcal E_t|>0.
\end{equation}
We omit steps before any supporting relation appears rather than inserting a pseudo-count into the denominator.
Blinded annotators assess whether each relation can be recovered from the text retained by each method, including pending source excerpts.

Figure~\ref{fig:memory-utilization} presents the Qwen3.5-4B HotpotQA experiment with 6,400 documents and 128 reading chunks, using the same occupancy trace as Figure~\ref{fig:memory-occupancy} in the main text.
The curves describe retained textual memory during reading; the pending-set component is included in \our's total context-memory budget.
The final plotted reading-state sample precedes the final resolution of pending evidence.
The final answer is generated only after the remaining pending records have been resolved, leaving $P_T=\varnothing$.

\begin{figure}[t]
\centering
\includegraphics[width=0.7\linewidth]{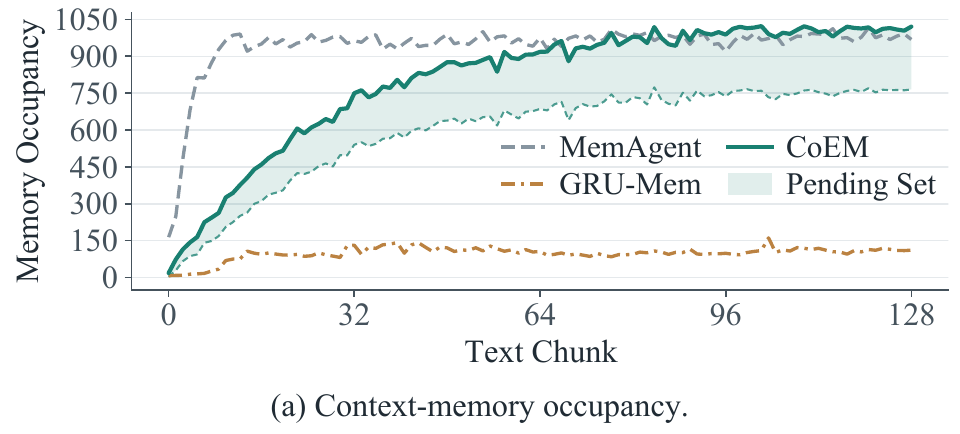}
\caption{Context-memory occupancy during the Qwen3.5-4B HotpotQA experiment with 6,400 documents and 128 chunks. The shaded band shows pending source text above committed memory; both components contribute to \our's total retained context memory. This enlarged view uses the same experimental trace as Figure~\ref{fig:memory-occupancy}.}
\label{fig:memory-utilization}
\end{figure}

\paragraph{Pending-set occupancy and capacity events.}
Figure~\ref{fig:analysis-composite}(a) shows an example trajectory in which pending occupancy rises as unresolved sources arrive and falls as they are promoted or dropped.
Additional reconsideration under capacity pressure occurs only when the next eligible source does not fit; the controller does not evict sources at every step.
The trace includes three such events and final resolution, with $\ell(P_t)\leq256$ throughout.

A separate 2Wiki study tests ordinary, long-gap, and source-dense document orderings (Table~\ref{tab:pending-pressure}).
We report mean and 95th-percentile pending occupancy, admission-triggered capacity events, forced eviction of pending records, and token-budget violations.
The capacity-event rate uses eligible source admissions as its denominator.
The forced-drop rate uses admitted pending records, with each evicted record counted at most once.
No token-budget violations occur in these conditions.
The stress condition tests a limitation of bounded memory: verification cannot recover an informative excerpt once it has been discarded.

\begin{table}[t]
\centering
\caption{Pending-set stress diagnostics for Qwen3.5-4B on 2WikiMultiHopQA with 6,400 documents and $B_P=256$. Mean / P95 tokens report the mean and 95th-percentile occupancy of $P$. The capacity-event rate is the fraction of eligible admissions that require freeing space; the forced-drop rate counts evicted resident pending records per admitted record; overflow denotes token-budget violations.}
\label{tab:pending-pressure}
\vspace{4pt}
{\renewcommand{\arraystretch}{1.15}
\begin{tabular}{lrrrr}
\toprule
2Wiki condition & Mean / P95 tokens & Capacity events & Forced drops & Overflow \\
\midrule
Ordinary ordering   & 142 / 223 &  2.8\% & 0.5\% & 0.0\% \\
Gap 128             & 183 / 246 &  6.2\% & 1.8\% & 0.0\% \\
Source-dense stress & 224 / 254 & 19.7\% & 7.6\% & 0.0\% \\
\bottomrule
\end{tabular}}
\end{table}

\subsection{End-to-end efficiency}
\label{app:efficiency}
The timing study uses one H800 per recurrent inference stream, batch size one, bfloat16 weights, and identical decoding limits.
Cross-request prefix caching is disabled.
Both policy scales use tensor-parallel size one in a dedicated timing deployment to hold hardware constant.
The timer runs from before the first recurrent prompt until completion of the final answer.
It includes serialization, all policy calls, rejected proposals, verifier calls, capacity handling, final resolution, and any callback or archive operations.
After ten warm-up questions, we time the same 128 questions for all methods and report mean latency together with token and call counts.
Asynchronous component timers are diagnostic and are not summed when operations overlap.
Peak allocated device memory, verifier-resident memory, archive bytes, processed tokens, generated tokens, and callback tokens are logged separately.
Early stopping remains disabled.

\begin{table}[t]
\centering
\caption{End-to-end latency (s/question) for the Qwen3.5-4B policies on one H800. Full-scan timing includes all controller operations, verification, memory callback, and final-answer generation.}
\label{tab:efficiency}
\small
\setlength{\tabcolsep}{4pt}
\begin{tabular}{lrrrr}
\toprule
Method & 800 docs & 1,600 docs & 3,200 docs & 6,400 docs \\
\midrule
MemAgent & 84 & 171 & 351 & 720 \\
GRU-Mem & 55 & 112 & 231 & 474 \\
ReMemR1$^\dagger$ & 102 & 207 & 425 & 870 \\
ReMemR1 (native) & 108 & 221 & 459 & 951 \\
\textbf{CoEM} & 63 & 130 & 267 & 548 \\
\bottomrule
\end{tabular}
\end{table}

Table~\ref{tab:efficiency} supplies the numerical values plotted in Figure~\ref{fig:analysis-composite}(f).
The verifier comparison is shown in Figure~\ref{fig:analysis-composite}(e), with its calibration and audit protocol in Appendix~\ref{app:verifier-details}.

\subsection{Training curves and memory decisions}
\label{app:training-diagnostics}

\begingroup
We examine how answer quality and memory decisions change during policy optimization.
The comparison uses Qwen3.5-4B with the 200-document training construction, 32,768 training questions, 128 questions per rollout batch, eight rollouts per question, and 500 training updates.
Both methods use the same 1,024-token context-memory budget, 256-token pending allocation, 5,000-token chunks, and frozen verifier at $\eta=0.90$.
\our combines answer and evidence advantages with $\alpha=0.8$; Outcome-only RL retains the same inference controller and verifier but trains with $\alpha=1$ and no local penalties.
Three runs use seeds 17, 29, and 43.

\paragraph{Answer quality and checkpoint selection.}
Figure~\ref{fig:rl-performance} reports answer F1 on training rollouts at every update and on the fixed 512-question development cohort at 200 documents every 25 updates.
The mean training F1 at update 500 is 87.3 for \our and 82.7 for Outcome-only RL.
The highest mean development F1 is 86.7 at update 450 for \our and 81.9 at update 425 for Outcome-only RL; the final-update values are 86.4 and 81.7.
Thus, the best development checkpoint need not be the last update.
Each run selects its own checkpoint using the development protocol in Appendix~\ref{app:implementation}; the marked maxima of the mean curves summarize the plotted runs and are not a replacement for per-run checkpoint selection.
Shaded bands show one sample standard deviation across policy seeds.

\paragraph{Verifier rejections and invalid operations.}
At update 500, the verifier-rejection rate is 5.2\% for \our and 13.6\% for Outcome-only RL, measured among proposals that pass the non-entailment checks.
Invalid-operation rates are 0.8\% and 1.3\%, respectively, using all proposal attempts as their denominator.
The two denominators are kept separate because an invalid operation is not also counted as a verifier rejection.
These quantities distinguish source-verification failures from invalid operations.
\endgroup

\begin{figure}[t]
\centering
\includegraphics[width=\linewidth]{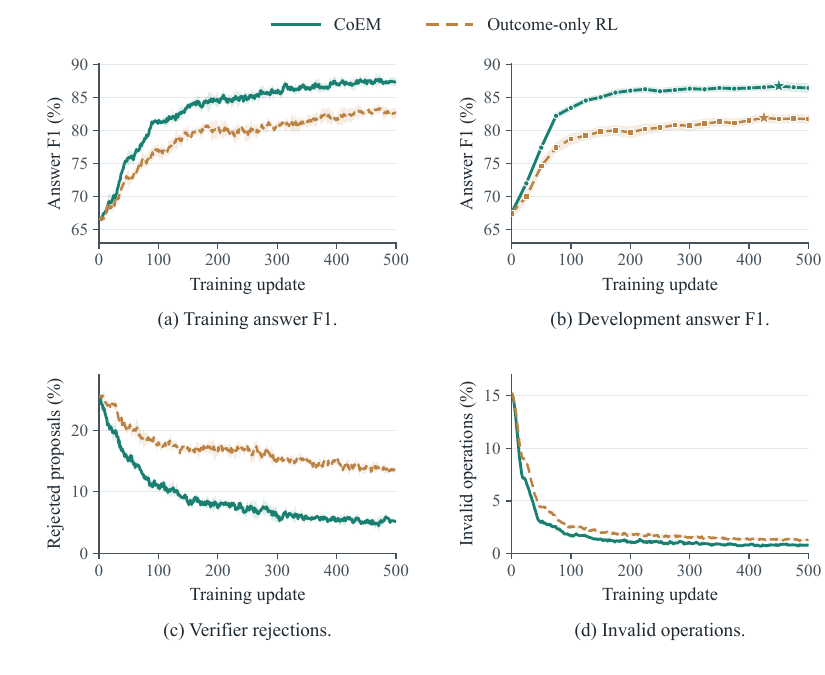}
\caption{Training diagnostics for Qwen3.5-4B. (a) Training answer F1. (b) Development answer F1, evaluated every 25 updates; stars mark the maxima of the mean curves, rather than the checkpoints selected separately for each run. (c) Verifier-rejection rate among proposals passing all non-entailment checks. (d) Invalid-operation rate among all proposal attempts. Lines show three-seed means and shaded bands show one sample standard deviation across policy seeds.}
\label{fig:rl-performance}
\end{figure}

\begingroup
\paragraph{Memory decisions and step-level evidence rewards.}
Figure~\ref{fig:rl-behavior} reports step-level evidence rewards, fact proposals checked by the verifier, proposals passing verification, and voluntary \textsc{Drop} actions per reading chunk.
For both methods, the displayed evidence reward is nonpositive and is computed using $\lambda_v=0.20$ and $\lambda_f=0.05$; it is a diagnostic only for Outcome-only RL and does not enter that method's objective.
At update 500, \our makes 2.55 eligible proposals per chunk, of which approximately 2.42 pass the verifier, compared with 2.15 and 1.86 for Outcome-only RL.
The corresponding mean evidence rewards are $-0.0275$ and $-0.0599$ per chunk.
More proposals pass verification while fewer are rejected, so the smaller penalty is not explained simply by avoiding proposals.
Verification evaluates each proposed fact: an atomic transaction containing several facts still fails if any other fact or capacity check fails.
Voluntary \textsc{Drop} rates are 0.24 and 0.42 per chunk; forced drops under capacity pressure are excluded from this counter and are reported separately in Table~\ref{tab:pending-pressure}.
\endgroup

\begin{figure}[t]
\centering
\includegraphics[width=\linewidth]{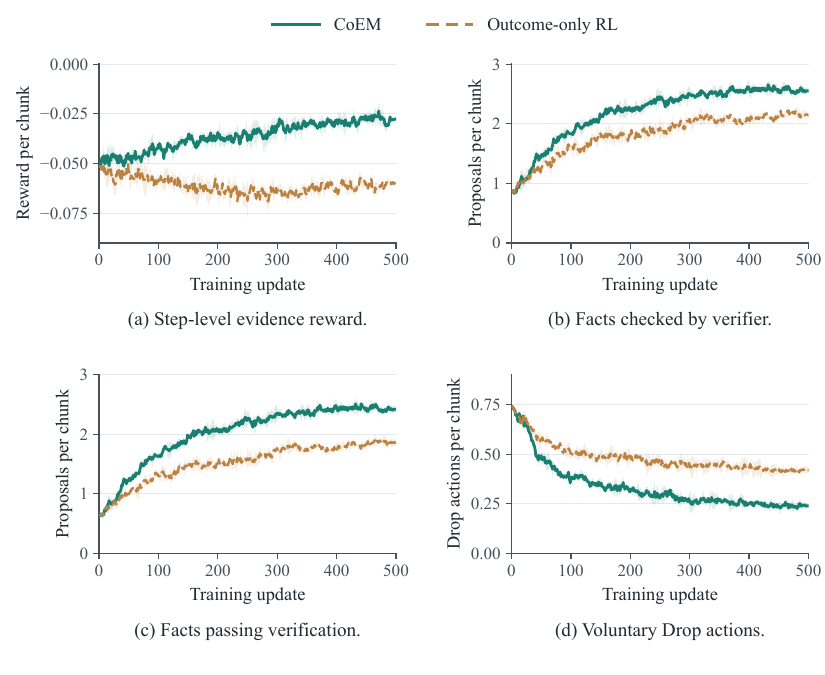}
\caption{Memory decisions during training. (a) Step-level evidence reward. (b) Fact proposals checked by the verifier. (c) Proposals passing verification. (d) Voluntary \textsc{Drop} actions. Panels (b)--(d) report counts per reading chunk. Lines show three-seed means and shaded bands show one sample standard deviation across policy seeds. Passing verification does not imply that the corresponding atomic transaction is committed. Evidence rewards for Outcome-only RL are logged but do not affect its updates.}
\label{fig:rl-behavior}
\end{figure}

\subsection{Variation across policy seeds}
\label{app:seed-uncertainty}

\begingroup
Table~\ref{tab:seed-uncertainty} separates training variation from uncertainty over evaluation questions for the central comparisons.
Each setting reports the mean and sample standard deviation over seeds 17, 29, and 43.
For a paired difference, we resample the 128 question IDs 10,000 times, using the same resampled IDs for both methods and all three policy seeds, and recompute the mean difference.
The 2.5th and 97.5th percentiles give the displayed interval.
These question-bootstrap intervals condition on the available seeds; they do not replace the reported seed standard deviations.
Each row concerns one dataset and one context length, so the table does not treat repeated lengths as additional independent questions.
\endgroup

\begin{table}[!htbp]
\centering
\caption{Uncertainty summaries for key comparisons. Settings list backbone / dataset / number of documents. F1 entries are mean $\pm$ SD over policy seeds 17, 29, and 43; $\Delta$ is CoEM minus comparator F1. Each setting contains 128 paired questions, and the 95\% interval for $\Delta$ uses 10,000 question-level bootstrap resamples that preserve method and seed pairing. $\dagger$ denotes budget-matched ReMemR1.}
\label{tab:seed-uncertainty}
\vspace{4pt}
{\renewcommand{\arraystretch}{1.15}
\begin{tabular}{llrrr}
\toprule
Setting & Comparator & CoEM & Comparator & $\Delta$ [95\% CI] \\
\midrule
4B / HotpotQA / 800   & Outcome-only RL   & $85.5\pm0.46$ & $80.4\pm0.85$ & 5.1 [3.3, 7.0] \\
4B / 2Wiki / 800      & Outcome-only RL   & $76.1\pm0.56$ & $69.9\pm0.85$ & 6.2 [4.3, 8.2] \\
\addlinespace[3pt]
9B / HotpotQA / 6,400 & ReMemR1$^\dagger$ & $89.1\pm0.40$ & $78.7\pm0.75$ & 10.4 [7.6, 13.4] \\
9B / 2Wiki / 6,400    & ReMemR1$^\dagger$ & $81.0\pm0.56$ & $69.6\pm0.75$ & 11.4 [8.7, 14.3] \\
9B / MuSiQue / 6,400  & ReMemR1$^\dagger$ & $72.3\pm0.66$ & $61.6\pm0.85$ & 10.7 [8.2, 13.4] \\
\bottomrule
\end{tabular}}
\end{table}

\subsection{Supporting-evidence retention and commitment timing}
\label{app:gold-retention}

\begingroup
The occupancy measurements describe how much text is retained; here we examine whether the retained text preserves annotated supporting relations.
The comparison uses Qwen3.5-4B on 128 HotpotQA questions with 6,400 documents, with the same policy seeds and memory allocations as above.
Coverage uses the definition in Appendix~\ref{app:retention-diagnostics} and includes both committed facts and pending source excerpts.
We sample 32 equally spaced fractions of each document stream, with the final sample taken before final resolution, rather than restricting measurement to the first 32 chunks.
At a checkpoint, a question contributes to the mean only after at least one annotated supporting relation has been observed.
This separates evidence recoverability from total token occupancy.

Figure~\ref{fig:evidence-retention}(a) shows that coverage immediately before final resolution is 86.2\% for \our and 74.8\% for Outcome-only RL, a difference of 11.4 percentage points.
For the source-to-bridge gap intervention, panel (b) measures the fraction of initially admitted pending records containing annotated supporting evidence that remain in $P$ immediately before their bridge evidence arrives.
At a 128-chunk gap, this source-survival rate is 89.8\% for \our and 73.4\% for Outcome-only RL.
An excerpt promoted before the bridge is absent from $P$ even if its supporting relation survives in $M$, so source survival and relation coverage measure different properties.

Panel (c) separates the outcomes of these initially admitted records for a 32-chunk gap.
The categories are commitment before bridge arrival, commitment at the bridge step or the following step, later commitment, and removal without commitment.
They share one denominator and sum to 100\% for each method.
The share committed at the bridge step or the following step is 81.3\% for \our and 68.2\% for Outcome-only RL.
The dropped category includes removals both before and after the bridge; it therefore does not equal one minus pre-bridge source survival.
These diagnostics use initially admitted supporting-source records as their denominator, which differs from the denominator of the aggregate forced-drop rates in Table~\ref{tab:pending-pressure}.
\endgroup

\begin{figure}[t]
\centering
\includegraphics[width=\linewidth]{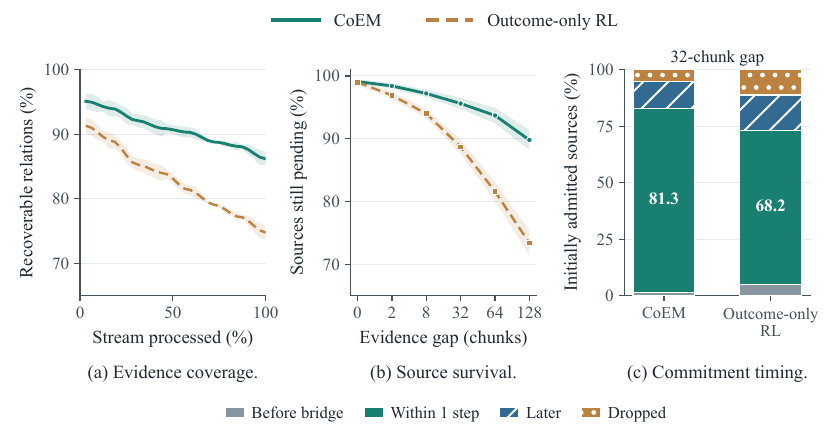}
\caption{Supporting-evidence diagnostics for Qwen3.5-4B on HotpotQA. (a) Recoverable supporting relations at 32 normalized reading positions, before final resolution. (b) Pending-source survival immediately before bridge arrival under the existing gap intervention; labeled gap settings are equally spaced. (c) Commitment timing or removal for initially admitted records containing supporting evidence at gap 32. Within 1 step denotes commitment at the bridge step or the following step. Shaded bands in (a) and (b) show one sample standard deviation across policy seeds; panel (c) reports pooled record proportions.}
\label{fig:evidence-retention}
\end{figure}

\subsection{Reward ablations with matched answer weight}
\label{app:reward-controls}

\begingroup
The Outcome-only RL variant in Table~\ref{tab:ablation} uses $A_i^{\rm ans}$, whereas full \our weights the answer advantage by 0.8.
To isolate the evidence term while keeping the answer-advantage and KL coefficients fixed, we compare three advantages:
\begin{equation}
\begin{aligned}
 A_{i,t}^{\rm no\text{-}evd}
   &=0.8A_i^{\rm ans},\\
 A_{i,t}^{\rm immediate}
   &=0.8A_i^{\rm ans}
     +0.2\left(\frac{r_{i,t}^{\rm evd}}{T}
       -\frac{1}{G}\sum_{j=1}^{G}\frac{r_{j,t}^{\rm evd}}{T}\right),\\
 A_{i,t}^{\rm full}
   &=0.8A_i^{\rm ans}
     +0.2\left(L_{i,t}-\frac{1}{G}\sum_{j=1}^{G}L_{j,t}\right).
\end{aligned}
\label{eq:reward-controls}
\end{equation}
All three retain the same controller, verification gate, generated-token normalization, reference policy, clipping coefficient, and $\beta=10^{-3}$.
The immediate and return-to-go variants use the same $\lambda_v=0.20$, $\lambda_f=0.05$, and $1/T$ normalization.
Only the evidence return differs: the immediate variant centers the current chunk's penalty, while the full variant centers the sum of penalties from that chunk onward.
During final-answer generation the evidence term is zero for every variant.

Table~\ref{tab:reward-controls} reports Qwen3.5-4B results at 800 documents after matched training.
Compared with the control using the same answer weight and no evidence term, \our gains 4.5 F1 points on HotpotQA and 5.6 on 2WikiMultiHopQA.
The return-to-go variant exceeds the immediate evidence reward by 2.4 and 2.9 points, respectively.
These comparisons separate adding evidence feedback from propagating its subsequent penalties to earlier memory decisions.
They evaluate aggregate training effects and do not identify a causal contribution for an individual retained excerpt.
\endgroup

\begin{table}[!htbp]
\centering
\caption{Reward controls with Qwen3.5-4B at 800 documents (mean $\pm$ SD over three seeds). $\alpha$ is the answer-advantage weight; matched controls fix $\alpha=0.8$ and the KL coefficient at $10^{-3}$, and both evidence variants share the same $1/T$ scaling and penalty weights.}
\label{tab:reward-controls}
\vspace{4pt}
{\renewcommand{\arraystretch}{1.15}
\begin{tabular}{lclrr}
\toprule
Training objective & $\alpha$ & Evidence reward & HotpotQA F1 & 2Wiki F1 \\
\midrule
Outcome-only RL              & 1.0 & None          & $80.4\pm0.85$ & $69.9\pm0.85$ \\
Matched $\alpha$, no evidence & 0.8 & None          & $81.0\pm0.70$ & $70.5\pm0.80$ \\
Immediate evidence reward    & 0.8 & Current chunk & $83.1\pm0.60$ & $73.2\pm0.70$ \\
Full CoEM                    & 0.8 & Return-to-go  & $85.5\pm0.46$ & $76.1\pm0.56$ \\
\bottomrule
\end{tabular}}
\end{table}

\FloatBarrier
\section{Computational resources}
\label{app:compute-resources}
Both policy scales train on eight NVIDIA H800 GPUs with full-parameter FSDP and gradient checkpointing; evaluation uses H800 and A100 GPUs.
Appendix~\ref{app:implementation} specifies their shared training and checkpoint-selection settings.
The 1,024-token context-memory budget counts serialized state carried between chunks, separately from execution memory and any external baseline archive.

\begingroup
\paragraph{Training accounting.}
Table~\ref{tab:training-resources} covers full 500-update \our runs, including rollouts, updates, checkpointing, and development evaluation.
GPU-hours count all eight allocated GPUs.
Totals cover seeds 17, 29, and 43; baseline training, ablations, calibration, and exploratory runs are accounted for separately.
\endgroup

\begin{table}[!htbp]
\centering
\caption{Training resources for \our on nodes with eight H800 GPUs.}
\label{tab:training-resources}
\begingroup
\small
\setlength{\tabcolsep}{5pt}
\begin{tabular}{lrrrr}
\toprule
Policy & GPUs/run & Hours/run & GPU-h/run & GPU-h/3 seeds \\
\midrule
Qwen3.5-4B & 8 & 18.0 & 144 & 432 \\
Qwen3.5-9B & 8 & 42.0 & 336 & 1,008 \\
\midrule
Total for the six runs & --- & --- & --- & 1,440 \\
\bottomrule
\end{tabular}
\endgroup
\end{table}

\begingroup
\paragraph{Inference accounting.}
Table~\ref{tab:inference-resources} reports one Qwen3.5-4B inference run on one H800 at batch size one for a 6,400-document, 128-chunk HotpotQA input.
These counts describe this run, not cohort averages.
Input tokens count repeated prompts and current-chunk rereads; output tokens include rejected actions.
Verifier tokens use its own tokenizer.
Serial component times sum to 548\,s, including $360\times4.3$\,ms for verification, aligning with Table~\ref{tab:efficiency}.
Overlapping timers must not be summed; end-to-end wall time includes every operation.
Full-context timing uses its separate tensor-parallel deployment.
\endgroup

\begin{table}[!htbp]
\centering
\caption{Inference resources. Additional calls cover capacity handling and final resolution; verifier memory is included in peak memory.}
\label{tab:inference-resources}
\begingroup
\small
\setlength{\tabcolsep}{6pt}
\begin{tabular}{lr}
\toprule
Resource or operation & Value \\
\midrule
Regular / additional / answer policy calls & 128 / 12 / 1 \\
Total policy calls & 141 \\
Policy input / generated tokens & 1,020,672 / 21,432 \\
Final-answer tokens (included above) & 52 \\
Verifier pairs / processed pair tokens & 360 / 64,800 \\
Policy / verifier / controller time (s) & 540.300 / 1.548 / 6.152 \\
End-to-end wall time (s) & 548.000 \\
Peak allocated device memory (GiB) & 42.60 \\
Verifier-resident device memory (GiB) & 0.55 \\
External historical-source archive (bytes) & 0 \\
\bottomrule
\end{tabular}
\endgroup
\end{table}

\begingroup
\paragraph{Execution record.}
Run records store model/tokenizer revisions, PyTorch/Transformers/verl/vLLM and CUDA versions, GPU model/memory, parallelism settings, data-manifest hash, seed, selected checkpoint, and timing interval.
\endgroup

\clearpage
\definecolor{CoEMCaseRetained}{HTML}{198071}
\definecolor{CoEMCaseLink}{HTML}{326A92}
\definecolor{CoEMCaseLost}{HTML}{B9615E}
\definecolor{CoEMCaseNeutral}{HTML}{8795A0}

\section{Case studies}
\label{app:case-studies}
Two HotpotQA training examples~\citep{yang2018hotpotqa}, excluded from evaluation and tuning, illustrate retention and capacity pressure using the stepwise format of ReMemR1~\citep{shi2025rememr1}.
Source quotations are verbatim; \textcolor{CoEMCaseRetained}{green}, \textcolor{CoEMCaseLink}{blue}, and \textcolor{CoEMCaseLost}{red} mark retained evidence, linking facts, and lost evidence, respectively.
Bold marks new information; the data include both examples' IDs and complete contexts.

\subsection{Delayed commitment across an alias relation}
Figures~\ref{tab:success-demo} and~\ref{fig:success-commitment} follow one source excerpt from admission to a verified answer.
The early biography supplies both a nationality and a spousal relation, but the question uses a different name for the spouse.
The pending set preserves these source sentences until the alias becomes available.

Both cases use Qwen3.5-4B with $B_P=256$, $B_M=768$, $H=8$, $K=1$, $J=2$, and $\eta=0.90$.
The displayed sizes count serialized tokens, including source identifiers, ranks, and separators.
Each state reports committed memory $M$ and the pending set $P$ after the stated operation.
Only question-relevant entries are written out; reported occupancy includes the complete serialized states.

\begin{figure}[!htbp]
\centering
\includegraphics[width=\linewidth]{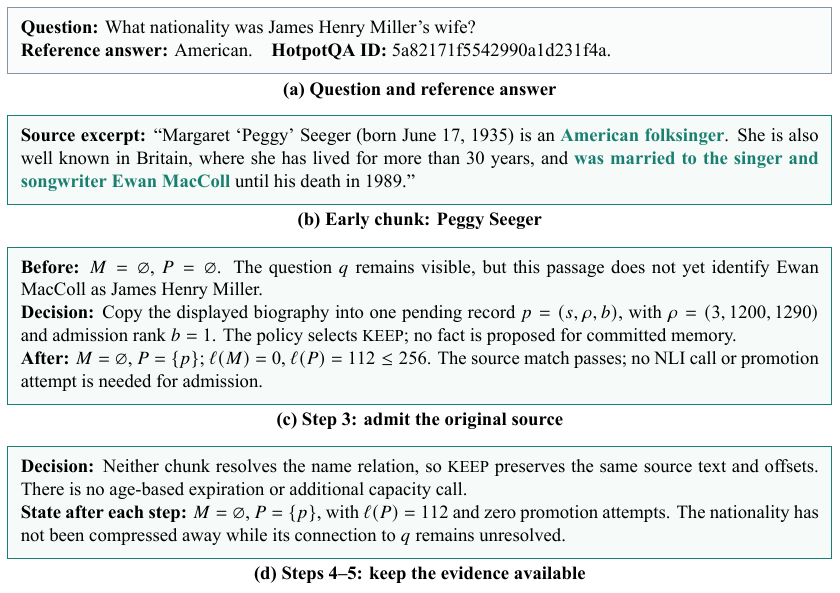}
\caption{Source admission and retention before an alias relation makes the biography relevant to the question.}
\label{tab:success-demo}
\end{figure}

\begingroup
\paragraph{What is preserved.}
The useful unit is the original pair of sentences: one states that Peggy Seeger is American, and the other links her to Ewan MacColl.
Keeping only the spouse's name would retain an entity connection while discarding the attribute requested by the question.
The record therefore carries the text needed both for later relevance assessment and for source verification; its identifier alone would not serve either purpose after the text was removed.

\paragraph{Why commitment waits.}
The biography already supports the nationality and spousal facts, so early commitment is possible in principle.
Here, the policy instead waits for evidence connecting those facts to the question.
This distinction separates support from relevance: source verification checks whether a fact follows from its cited text, while the memory decisions determine whether retaining that fact helps answer $q$.
\endgroup

\clearpage
\begin{figure}[!htbp]
\centering
\includegraphics[width=\linewidth]{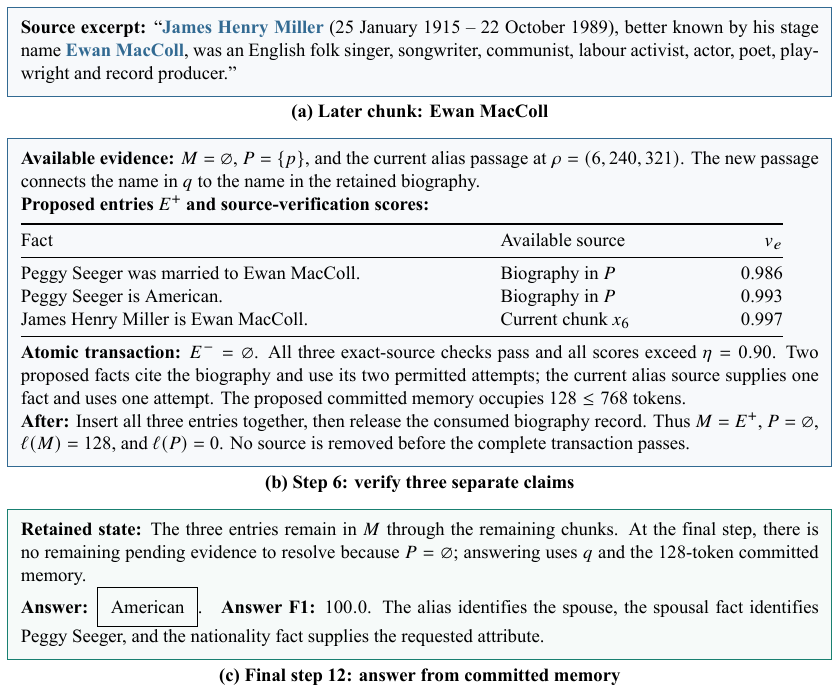}
\caption{Verified commitment after the alias arrives. Each claim is checked against an available source, and the three entries are inserted as one atomic transaction.}
\label{fig:success-commitment}
\end{figure}

\begingroup
\paragraph{Why the transaction is atomic.}
Promoting the two biography claims in separate consuming operations could make the second claim's premise unavailable after the first operation removes $p$.
The joint transaction checks every premise and the resulting memory before consuming that record.
It also respects $J=2$: adding a third biography-supported fact to this transaction would exceed that limit even if every proposed fact were supported.

\paragraph{What source verification establishes.}
The verifier receives the biography for its two factual claims and the current passage for the alias claim; it is not asked to certify the entire multi-hop answer in one compressed statement.
The final answer then follows from the three accepted relations.
A high entailment score is a gate decision, not a guarantee of semantic correctness, so this case illustrates the intended evidence flow rather than replacing the independent verifier audit.

\paragraph{Where premature compression would lose information.}
A memory containing only ``Peggy Seeger was married to Ewan MacColl'' would leave the nationality missing when the alias arrives.
Recalling that same shortened memory cannot restore an attribute absent from it.
Conversely, a method that preserves both relations in a supported summary could answer this example; the case isolates the consequence of losing the attribute, rather than establishing that every immediate summary must fail.
\endgroup

\clearpage
\subsection{Capacity pressure before a linking passage}
Figures~\ref{tab:failure-demo} and~\ref{fig:failure-resolution} follow a biography containing the requested year until capacity handling removes it.
The later film passage supplies the missing entity connection, but the discarded date is no longer available.
The failure depends on the displayed admission and replacement decisions under the fixed budgets.

\begin{figure}[!htbp]
\centering
\includegraphics[width=\linewidth]{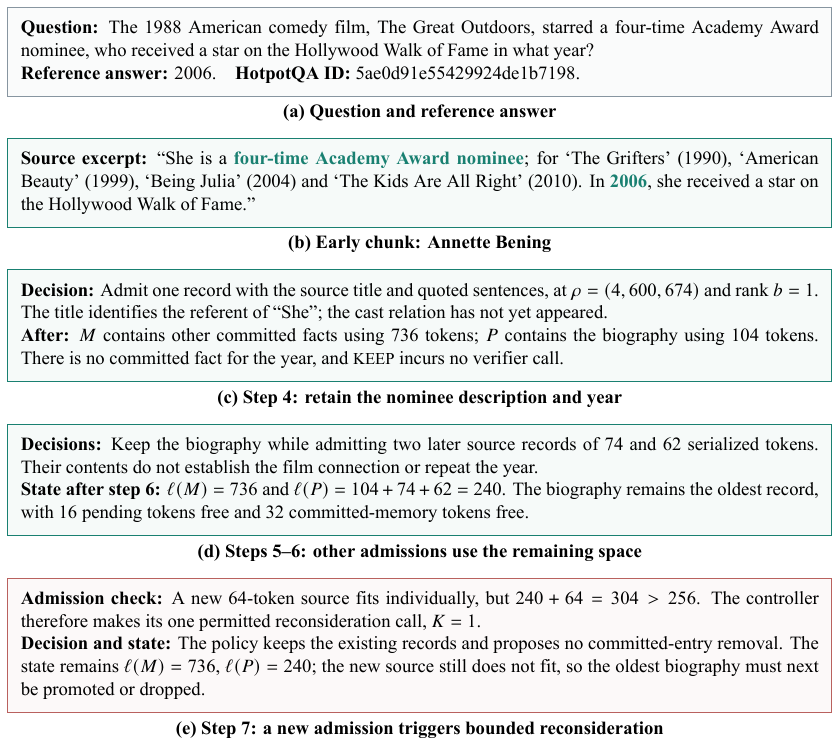}
\caption{Capacity pressure develops before the cast relation arrives. The new source passes its individual size check, but admitting it requires resolution of an existing pending record.}
\label{tab:failure-demo}
\end{figure}

\begingroup
\paragraph{Why this is a capacity event.}
The biography has not expired and has not been judged unsupported.
The trigger is the attempted admission of another eligible source into a nearly full pending set.
The bounded reconsideration pass allows the policy to free space, but its displayed choices leave both stores unchanged, so the controller proceeds to forced resolution.

\paragraph{What the ordering rule decides.}
Admission order selects the biography because it is the oldest remaining record; the rank is not a relevance score.
This rule keeps the controller bounded and ensures progress, but it cannot guarantee that the selected source is less useful than a later source whose relevance is also unresolved.
The critical question is therefore whether the policy can commit its useful content before removal.
\endgroup

\clearpage
\begin{figure}[!htbp]
\centering
\includegraphics[width=\linewidth]{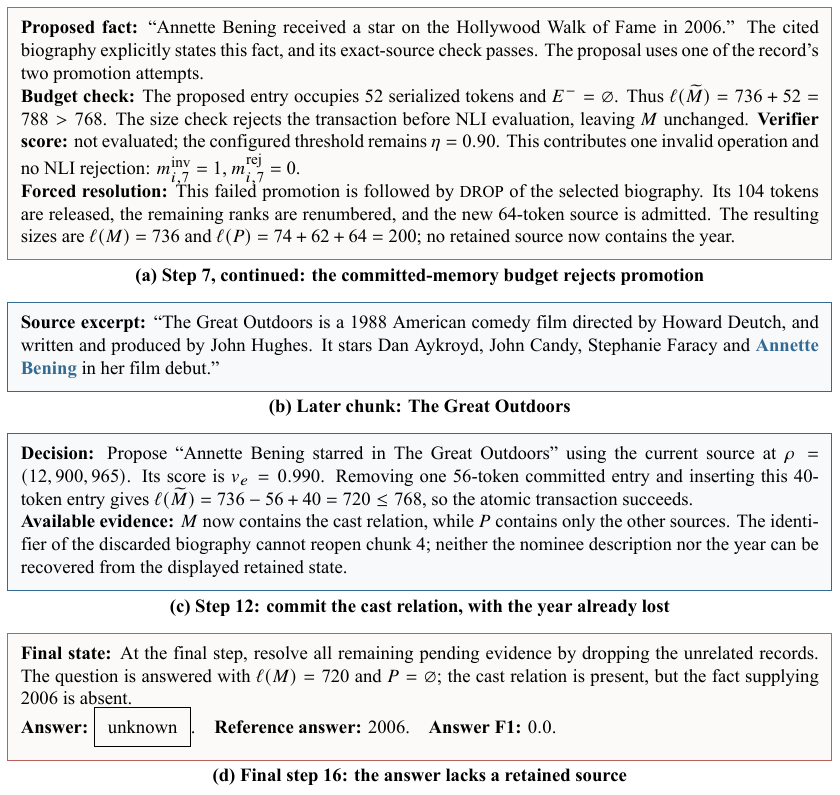}
\caption{A promotion can fail the committed-memory budget check even when its source states the proposed fact. Forced removal then makes the date unavailable when the later cast relation identifies the relevant person.}
\label{fig:failure-resolution}
\end{figure}

\begingroup
\paragraph{Where the failure occurs.}
The decisive loss occurs at step 7: the biography states the date, but the proposed replacement does not fit and the biography is removed under the capacity rule before NLI evaluation.
The later cast passage resolves the entity connection without repeating the date.
The final answer is consequently unsupported by the retained text even though the original stream contained the required evidence.

\paragraph{What could have preserved the year.}
At step 7, deleting suitable committed entries could have made the supported date fit; declining a less useful new admission could instead have preserved the original biography.
The displayed policy takes neither option.
The case therefore shows how bounded capacity interacts with specific retention and replacement decisions, rather than proving that the fixed budget inevitably causes this error or that verification alone can prevent it.

\paragraph{What verification cannot repair.}
Once the source has been dropped, a later proposal citing its old offsets fails the source-availability check before NLI scoring; it should not be assigned a low entailment probability as though the premise were still present.
The verifier cannot recover discarded evidence, and an answer guessed from parametric knowledge would not restore that evidence either.
Aggregate source-survival and capacity-pressure diagnostics are therefore needed to assess how often this failure occurs beyond the individual case.
\endgroup

\clearpage
\section{Formal analysis of memory control under partial observability}
\label{sec:theory}

We analyze the deterministic controller, delayed commitment, and step-level evidence rewards in \our under the fixed context-memory budget.
Execution guarantees require the controller rules specified below; decision and gradient results additionally require the stated probabilistic assumptions.

\subsection{The POMDP and its finite-memory controller}
\label{sec:theory-pomdp}

Let $\mu(q,D,y)$ be the distribution of reading episodes, each consisting of the document stream $D=(x_1,\ldots,x_T)$ and final-answer generation.
We condition on the finite horizon $T$ and obtain mixtures over horizons by averaging the corresponding episodes.
The full state of the environment and controller before reading step $t$ is
\begin{equation}
 s_t=(q,D,y,t,h_{t-1}),\qquad h_{t-1}=(M_{t-1},P_{t-1}).
 \label{eq:full-state}
\end{equation}
The initial distribution draws $(q,D,y)$ from $\mu$ and sets $t=1$, $h_0=(\varnothing,\varnothing)$.
The policy observes the context defined in Equation~\ref{eq:controller}:
\begin{equation}
 \Omega(s_t)=\mathcal C_t=(q,x_t,h_{t-1}).
 \label{eq:observation}
\end{equation}

The step-level action $a_t=(u_{t,1},\ldots,u_{t,d_t})$ records the outputs of all policy calls at reading step $t$, in execution order.
These calls include the initial memory decisions, bounded reconsideration, resolution of the oldest pending evidence under capacity pressure, and resolution of remaining pending evidence at the final step.
The number of calls $d_t$ and the output length of each call have fixed bounds.
Given the context $\xi_{t,r}$ for policy output $u_{t,r}$, the probability of the complete step-level action is
\begin{equation}
 \pi_\theta(a_t\mid\mathcal C_t)
 =\prod_{r=1}^{d_t}\pi_\theta(u_{t,r}\mid\xi_{t,r}).
 \label{eq:macro-policy}
\end{equation}
Each $\xi_{t,r}$ contains the question, current chunk, committed memory, pending set, and bounded controller feedback; preceding outputs affect it through the updated memory and within-step attempt counts, without accumulating a conversation history.
The chunk identifier in $x_t$ supplies the reading index $t$.

The controller computes $h_t=F(h_{t-1},x_t,a_t)$ by applying the memory decisions recorded in $a_t$.
Given $a_t$, the update function $F$ requires no further policy sampling.
With fixed verifier scores and deterministic tie breaking, the state transition kernel is
\begin{equation}
 \mathsf P(s_{t+1}\mid s_t,a_t)
 =\mathbf 1\!\left[s_{t+1}=(q,D,y,t+1,F(h_{t-1},x_t,a_t))\right].
 \label{eq:pomdp-transition}
\end{equation}
Policy sampling determines the selected action, while the transition is deterministic conditional on that action.

After step $T$, the policy generates $\hat y$ from $(q,M_T)$ as in Equation~\ref{eq:answer}, ending the episode.
Each reading step receives reward $(1-\alpha)r_t^{\mathrm{evd}}/T$, where $r_t^{\mathrm{evd}}$ is defined in Equation~\ref{eq:reward}.
The final answer reward is weighted by $\alpha$, giving $\alpha r_{\mathrm{ans}}(\hat y,y)$.

\begin{proposition}[A bounded controller under partial observability]
\label{prop:pomdp}
The process on the full states $s_t$ is Markov.
With a finite token vocabulary and token budget, the visible memory $h_t$ defines a finite-memory controller representation.
This representation need not be a sufficient statistic of the observation history.
\end{proposition}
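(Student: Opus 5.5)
The proof handles the three claims of Proposition~\ref{prop:pomdp} separately, each by direct construction from the definitions in Equations~(\ref{eq:full-state})--(\ref{eq:pomdp-transition}).

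\textbf{Markov property.} Fix $s_t=(q,D,y,t,h_{t-1})$. The observation $\Omega(s_t)=(q,x_t,h_{t-1})$ is a deterministic function of $s_t$, since $x_t$ is read off $D$ at index $t$. Hence the action distribution $\pi_\theta(a_t\mid\mathcal C_t)$ in Equation~(\ref{eq:macro-policy}) depends on the past only through $s_t$. The within-step contexts $\xi_{t,r}$ are built from $\mathcal C_t$, the earlier outputs $u_{t,1},\ldots,u_{t,r-1}$, and bounded controller feedback. By the no-conversation-history rule they carry nothing from earlier steps. The transition in Equation~(\ref{eq:pomdp-transition}) is a point mass at $(q,D,y,t+1,F(h_{t-1},x_t,a_t))$, and $F$ uses only $h_{t-1}$, $x_t$, $a_t$, frozen verifier scores, and deterministic tie breaking. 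Therefore
\[
\Pr(s_{t+1}\mid s_1,a_1,\ldots,s_t)=\sum_{a_t}\pi_\theta(a_t\mid\Omega(s_t))\,\mathsf P(s_{t+1}\mid s_t,a_t),
\]
which depends only on $s_t$. The terminal answer step is handled the same way, because it conditions on $(q,M_T)$ alone.

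\textbf{Finite-memory controller.} By Equation~(\ref{eq:budget}), $\ell(h_t)\le B$. Every persistent token, including identifiers, ranks, provenance $\rho$, and separators, is charged to the budget, so $h_t$ is a token string over a finite vocabulary $\mathcal V$ of length at most $B$. The set of reachable memory states therefore has size at most $\sum_{k\le B}|\mathcal V|^k<\infty$. Together with the update $h_t=F(h_{t-1},x_t,a_t)$ and the action rule $\pi_\theta(\cdot\mid q,x_t,h_{t-1})$, this is exactly a finite-state controller for a POMDP in the sense of Kaelbling et al.: a finite node set, a node-conditioned stochastic action rule, and a node update driven by the observation and the action.

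The main obstacle is a subtlety in this step. Provenance $\rho$ contains integer offsets, and the within-step attempt counters are also numbers, so one must confirm they cannot encode unboundedly many values. Both are serialized as tokens and counted in $\ell$. Offsets are bounded by $C$ and $T$ is finite, so the bound holds. The counters are reset every step and are not part of $h_t$.

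\textbf{Not a sufficient statistic.} A single counterexample suffices. Take $B_P=0$ and $B_M$ small, or any budget too small to hold two distinct excerpts. Construct two histories $x_{1:t}$ and $x'_{1:t}$ that lead to the same $h_t$ but different posteriors over the answer. For example, in chunk 1 one history contains ``A was born in X'' and the other ``A was born in Y,'' and neither is retained because $A$'s relevance is not yet known. Then let $x_{t+1}$ reveal that the question asks for $A$'s birthplace. The Bayes-optimal action, and the belief $\Pr(y\mid\text{history})$, differ between the two histories, but any policy acting on $h_t$ must treat them identically. A pigeonhole version gives the same conclusion more generally: the set of possible histories grows without bound with $T$, while $h_t$ takes finitely many values, so some distinct beliefs must collapse onto the same $h_t$. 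I will present the explicit two-history example for concreteness.
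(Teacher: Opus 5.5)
Your proposal is correct and follows the paper's three-part proof: the Markov property comes from the deterministic transition given $(s_t,a_t)$, and your extra marginalization over $\pi_\theta(\cdot\mid\Omega(s_t))$ only strengthens this to the closed-loop chain; finiteness comes from counting serialized strings of length at most $B$; and non-sufficiency comes from two document prefixes that collapse to the same $h_t$ after a detail that later determines the answer is discarded. Three minor points. Your worry about offsets and counters is unnecessary, because the string count already covers everything serialized within $B$. Since $q$ is observed at every step, the later chunk in your example should reveal the bridge linking $A$ to the question, not what the question asks. Finally, the two histories collapse because the policy chooses not to retain the detail (or by pigeonhole), not because you set $B_P=0$.
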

\begin{proof}
Given $(s_t,a_t)$, the transition and reward are independent of earlier states and actions, establishing the Markov property.
A finite token vocabulary admits finitely many serialized memory strings of length at most $B$, yielding finitely many controller configurations.
Different document prefixes can nevertheless produce the same $h_t$ after discarding a detail that later determines the answer; the conditional distribution of answer-relevant information can differ between those prefixes.
Thus the bounded representation need not be sufficient.
\end{proof}

An exact belief state instead retains the conditional distribution over full states:
\begin{equation}
 \mathfrak b_t(s)=\Pr(s_t=s\mid o_1,a_1,\ldots,o_{t-1},a_{t-1},o_t),
 \label{eq:belief}
\end{equation}
where $o_t=\Omega(s_t)$.
The standard Bayesian recursion updates this distribution using the transition and observation kernels~\citep{kaelbling1998pomdp}.
\our instead uses bounded context memory, with the pending set retaining verbatim source excerpts before commitment.

\paragraph{Commitment as a stopping time.}
Let $\mathcal F_t$ contain the observed contexts and all policy outputs through the end of reading step $t$.
For pending evidence $p$, $t_{\mathrm{arr}}(p)$ is its admission step, whereas $b$ is its current rank in the pending set.
Equation~\ref{eq:stopping-time} gives the time of the first accepted promotion using $p$, or $\infty$ if no such promotion occurs.
For every $t$, the event $\{t_p\leq t\}$ is determined by the accepted operations through $t$ and belongs to $\mathcal F_t$.
Thus $t_p$ is a stopping time; resolving all pending evidence at the final step does not give a finite commitment time to an item that is dropped.

\subsection{Memory budgets, termination, and source verification}
\label{sec:theory-resources}

We hold fixed the question bound $L_q$, chunk bound $C$, context-memory budget $B$, and policy-output bound $W_\pi$.
We also fix the candidate limit $H$, reconsideration limit $K$, per-source promotion-attempt limit $J$, verifier window $W_V$, and final-answer bound $W_{\mathrm{ans}}$.
The input and memory bounds include all serialized source metadata, including chunk identifiers.
The results apply to finite-horizon executions whose source identifiers fit these bounds.

Every committed entry and pending item has positive serialized token length, and both $M$ and $P$ allow a valid empty serialization.
All new or modified memory states are checked using the full serialization, including identifiers, ranks, and separators.
Each proposed fact consumes a promotion attempt for every source record it cites, and rejected proposals count toward the same finite limits.
Once an attempt limit is reached, no further promotion call is permitted for that source: ordinary proposals are rejected, and pending items selected for resolution under capacity pressure or at the final step are dropped.
The termination guarantee requires every such resolution to remove at least one pending item, including after invalid responses; Appendix~\ref{sec:algorithm-details} specifies how these responses are handled.

\begin{theorem}[Bounded state and linear inference cost]
\label{prop:resources}
Under these execution conditions, every policy output sequence satisfies Equation~\ref{eq:budget} and completes every reading step.
A $T$-chunk episode requires $O(T)$ policy and verifier calls.
For fixed model architectures and the preceding fixed limits, context-memory storage is $O(B)$ tokens and inference cost is $O(T)$.
\end{theorem}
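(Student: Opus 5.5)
The proof is an invariant argument plus a counting argument, carried out in three stages: budget preservation, termination of each reading step, and cost accounting.

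\textbf{Budget invariant.} We induct on the sequence of controller operations, not just on reading steps. The base case is $h_0=(\varnothing,\varnothing)$, whose valid empty serializations satisfy Equation~(\ref{eq:budget}). For the inductive step, we list every primitive the controller $F$ can execute and check that each either preserves the invariant or is rejected:
\begin{itemize}
\item a validated Keep admission of a fresh span;
\item a Drop or forced eviction;
\item an atomic transaction per Equation~(\ref{eq:transaction});
\item rank renumbering.
\end{itemize}
Drops and evictions only delete items of positive length, so $\ell$ cannot increase. Any formatting change from renumbering is already charged to $B_M$, and every new state is checked on its full serialization. Transactions commit only if $\ell(\widetilde M)\le B_M$; otherwise they leave $M$ unchanged. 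Admissions commit only once the resulting $P$ fits $B_P$. Since $B_M+B_P=B$, the global bound follows. The invariant holds regardless of what the policy emits, because invalid outputs either fail validation, and so leave the state unchanged, or trigger the fallback resolution, which only removes items.

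\textbf{Termination of each step.} I expect this to be the main obstacle. The plan is to bound the number of policy calls $d_t$ in step $t$ by a constant depending only on $H,K,J,B_P$. Fresh specifications are capped at $H$. Each capacity event allows $K$ reconsideration calls. Under forced resolution and final resolution, each call must remove at least one pending item; this holds even after invalid responses, by the fallback rule of Appendix~\ref{sec:algorithm-details}.

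Because every item has positive length, $|P|\le B_P$ at all times. Hence a single admission triggers at most $B_P$ forced resolutions, and the final resolution also needs at most $B_P$ calls. Promotion attempts are charged per cited source record and capped at $J$, and rejections consume attempts too. So promotion retries are bounded by $J$ times the number of records present in the step, which is at most $B_P+H$.

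Combining these gives $d_t\le c(H,K,J,B_P)$. The delicate point is the case where a promotion is rejected during forced resolution. I would argue that once the attempt cap is reached, the item is dropped. This gives a strictly decreasing potential, the lexicographic pair (remaining attempts, $|P|$), which cannot decrease forever. It is what guarantees that $P_T=\varnothing$ is reached.

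\textbf{Cost.} The number of verifier calls per step is at most the number of proposed facts, and this is bounded by $J(B_P+H)$. Summed over $T$ steps plus one answer call, the episode uses $O(T)$ policy and verifier calls. Each call has a bounded input:
\begin{itemize}
\item a policy call sees at most $L_q+C+B$ tokens plus the fixed reserve, and emits at most $W_\pi$ tokens;
\item a verifier call processes at most $W_V$ tokens;
\item the answer call emits at most $W_{\mathrm{ans}}$ tokens.
\end{itemize}
For fixed architectures, each call therefore costs $O(1)$, giving $O(T)$ total inference cost. Persistent storage is $\ell(h_t)\le B$, i.e.\ $O(B)$ tokens. No archive is kept, so provenance identifiers $\rho$ are counted within $B$ under the stated assumption that they fit the bounds.
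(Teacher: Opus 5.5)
Your proposal is correct and follows essentially the same route as the paper's proof. It uses induction over retained states with the full-serialization checks and the atomic transaction rule, then a per-chunk bound on policy and verifier calls from $H$, $K$, the positive-length bound on $|P|$, the rule that every forced or final resolution removes an item, and the shared cap $J$, summed over $T$ chunks plus the answer call.

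The differences are minor. You use the tighter $|P|\le B_P$ where the paper uses $B$, and you bound verifier calls by counting attempts, $J(B_P+H)$, rather than by $W_\pi$ facts per call. One correction: admission ranks are charged to $B_P$, not $B_M$. So the invariant after removal and renumbering should rest on the full-serialization check, as in the paper, rather than on that claim.
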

\begin{proof}
The empty initial state satisfies the budget.
Every retained state passes the serialized-token checks for $M$, $P$, and their combined state, including after pending removal and rank renumbering.
A rejected committed-memory transaction preserves $M$ by Equation~\ref{eq:transaction}; capacity handling may remove pending items but does not delete committed entries outside a checked transaction.
Induction over the retained states establishes Equation~\ref{eq:budget}.

At most $H$ new source specifications are processed per reading step, including invalid candidates and immediate promotions.
Positive serialized lengths imply at most $B$ pending items at any time.
Each admission permits at most $K$ additional reconsideration calls and at most $B$ resolutions under capacity pressure, since each resolution removes an item.
Resolving all remaining pending evidence at the final step likewise requires at most $B$ such operations.
The shared limit $J$ bounds promotion attempts for every cited source record across all calls in the chunk, and $W_\pi$ bounds the number of proposed facts generated per call.
Consequently, policy and verifier calls per chunk are bounded by constants depending only on the fixed limits.

Summing over $T$ chunks and one answer-generation call proves termination and the call bound.
Bounded policy and verifier input/output lengths imply bounded computation per call for fixed architectures, giving $O(T)$ inference cost and $O(B)$ context-memory storage.
\end{proof}

The linear bound includes all reconsideration and verifier calls, whose constant-factor overhead is included in the end-to-end efficiency measurement.
The current chunk and policy outputs are used within the reading step; only $h_t=(M_t,P_t)$ is carried into the next step.
The controller interface excludes a growing conversation history or source archive.

\begin{proposition}[Source verification and atomic memory updates]
\label{prop:transaction}
Every fact in committed memory is either unchanged from the preceding state or has passed the source-availability, source-matching, and NLI checks in Equation~\ref{eq:gate}.
These checks apply at the most recent insertion or modification of that fact.
A failed update preserves every preexisting committed entry and its source identifiers.
\end{proposition}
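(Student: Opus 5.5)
The plan is an induction over the sequence of retained committed-memory states. The states are indexed by every controller operation in execution order, across all reading steps and all policy calls within a step, as recorded in the step-level action $a_t$ of Equation~\ref{eq:macro-policy}. The base case is the empty initial state $M_0=\varnothing$, for which the claim holds vacuously. The inductive claim is this: after each operation, every entry $e=(f,\rho)\in M$ either occurred identically in the state before that operation, or was inserted by that operation and passed the gate $g(e,S_e)=1$ of Equation~\ref{eq:gate}, with $S_e$ formed from spans available in $x_t$ or $P$ at that moment.

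The first step is to enumerate the operations that can touch $M$ and check each one.
\begin{itemize}
\item Keep, Drop, admission of fresh candidates, rank renumbering, and forced or terminal resolution by dropping act only on $P$. By the controller rules stated before Theorem~\ref{prop:resources}, they never delete committed entries outside a checked transaction, so $M$ is unchanged.
\item A Promote, whether ordinary, during reconsideration, under capacity pressure, or at final resolution, is executed only through the transaction rule in Equation~\ref{eq:transaction}. Its output is either $M$ itself or $\widetilde M=(M\setminus E^-)\cup E^+$.
\end{itemize}
In the second case, entries of $M\setminus E^-$ are unchanged. Every $e\in E^+$ satisfies $g(e,S_e)=1$, which unpacks into $v_e\ge\eta$ together with $\mathrm{valid}(e,S_e)$, that is, availability and exact source matching. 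Rewriting or merging an existing fact is treated as a new insertion in $E^+$, so the only way a fact can be \emph{modified} is through a checked insertion. This yields the claim that the checks apply at the most recent insertion or modification.

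The failure clause follows from the ``otherwise'' branch of Equation~\ref{eq:transaction}. When the transaction fails, $M'=M$ as a set of pairs $(f,\rho)$, so each preexisting entry keeps its provenance $\rho$. The failure conditions to cover are any rejected verification, any budget violation, any invalid operation, and a capacity-infeasible proposal.

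The main obstacle is ensuring that no path modifies $M$ outside the transaction. The paths to rule out are a forced or terminal resolution whose response is invalid, and a case where the combined budget $\ell(h_t)\le B$ fails even though $\ell(\widetilde M)\le B_M$ holds. For these I would invoke the controller specification: capacity handling removes only pending items, and the fallback for invalid forced responses is to drop the designated pending record. For the global budget, I would note that the transaction is also subject to the constraint in Equation~\ref{eq:budget} and is rejected as a whole otherwise. A second subtlety is within-call ordering. Each operation is checked against the state left by the preceding operations, so availability is evaluated at execution time and a consumed source cannot certify a later fact. With these rules in place, the induction closes.
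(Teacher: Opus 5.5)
Your proposal is correct and matches the paper's proof, which is an induction from the empty initial memory. In that induction, the success branch of Equation~\ref{eq:transaction} keeps only the unchanged entries of $M\setminus E^-$ plus entries of $E^+$ with $g(e,S_e)=1$, and the failure branch sets $M'=M$. Your extra enumeration of the operations that act only on $P$ (Keep, Drop, admission, forced or terminal drops) just makes explicit what the paper leaves implicit: $M$ changes only through that transaction rule.
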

\begin{proof}
The initial committed memory is empty.
For a successful update, entries in $M\setminus E^-$ are unchanged and every new entry $e\in E^+$ passes $g(e,S_e)=1$.
These are the only entries retained by Equation~\ref{eq:transaction}.
For a failed update, that equation sets $M'=M$.
Induction gives the two statements for every transition.
\end{proof}

Passing the checks does not guarantee source support; a bound on unsupported committed facts requires an additional assumption about verifier errors.
Let $U_j$ indicate that the $j$th accepted insertion or modification is unsupported by its cited source.
Suppose the audited deployment distribution satisfies
\begin{equation}
 \Pr(U_j=1\mid\mathcal F_{j-1},\text{the }j\text{th change is accepted})\leq\varepsilon_V,
 \label{eq:conditional-calibration}
\end{equation}
where $\mathcal F_{j-1}$ includes all preceding proposals and gate outcomes.
This condition uniformly bounds errors among accepted insertions and modifications; a high NLI score alone does not establish the bound.

\begin{corollary}[Accepted-error control]
\label{cor:accepted-error}
Under Equation~\ref{eq:conditional-calibration}, among at most $n$ accepted fact insertions or modifications,
\begin{equation}
 \mathbb E\!\left[\sum_{j=1}^{n}U_j\right]\leq n\varepsilon_V,
 \qquad
 \Pr\!\left(\sum_{j=1}^{n}U_j\geq1\right)\leq\min\{1,n\varepsilon_V\}.
 \label{eq:accepted-error}
\end{equation}
\end{corollary}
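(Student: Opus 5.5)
The plan is to treat the unsupported-error indicators as a sequence adapted to the filtration of accepted changes, take expectations one step at a time using the conditional bound in Equation~\ref{eq:conditional-calibration}, and then turn the expectation bound into a probability bound with Markov's inequality. Index the accepted fact insertions or modifications in order of acceptance, $j=1,2,\ldots$. Let $N\le n$ denote their (possibly random) total number, and set $U_j=0$ for $j>N$. Because of Proposition~\ref{prop:transaction}, every committed fact is either unchanged or has passed through one of these accepted changes. So the $U_j$ account for every unsupported committed fact: an unchanged fact was already charged at its most recent acceptance.

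For the expectation, I would write
\[
\sum_{j=1}^{n}U_j=\sum_{j=1}^{n}U_j\,\mathbf 1[j\le N].
\]
The event $\{j\le N\}$ is the event that the $j$th change is accepted. For each $j$, apply the tower property with respect to $\mathcal F_{j-1}$ together with the acceptance event:
\[
\mathbb E\bigl[U_j\mathbf 1[j\le N]\bigr]=\mathbb E\Bigl[\mathbf 1[j\le N]\,\Pr\bigl(U_j=1\mid\mathcal F_{j-1},\,j\text{th change accepted}\bigr)\Bigr]\le\varepsilon_V\Pr(j\le N)\le\varepsilon_V.
\]
Summing over $j\le n$ by linearity gives $\mathbb E[\sum_j U_j]\le n\varepsilon_V$. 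Independence among the $U_j$ is not needed, since only linearity and the per-step conditional bound enter. A slightly sharper intermediate form is $\varepsilon_V\,\mathbb E[N]$, which I would record and then bound by $n\varepsilon_V$.

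For the probability bound, $\sum_j U_j$ is a nonnegative integer-valued variable. Markov's inequality at level $1$ therefore gives
\[
\Pr\Bigl(\sum_j U_j\ge1\Bigr)\le\mathbb E\Bigl[\sum_j U_j\Bigr]\le n\varepsilon_V.
\]
Since a probability never exceeds $1$, this becomes $\min\{1,n\varepsilon_V\}$. An equivalent route is a union bound over the events $\{U_j=1\}$.

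The main obstacle is measure-theoretic bookkeeping rather than any inequality. The conditioning event in Equation~\ref{eq:conditional-calibration}, namely that the $j$th change is accepted, must be well defined as an event determined by $\mathcal F_{j-1}$ and the $j$th gate outcome. The step-$j$ conditional expectation must also be taken on that event only, and not over all proposals. I would handle this by defining the acceptance index through the sequence of stopping times at which successive accepted changes occur, in the same way $t_p$ was shown to be a stopping time. Then $\{j\le N\}$ is the event that the $j$th such stopping time is finite, and the tower step above is justified. The argument must also not silently assume that $N=n$ deterministically, which is the reason for the padding convention.
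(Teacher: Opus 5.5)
Your proposal is correct and matches the paper's proof step for step. Both set $U_j=0$ beyond the realized number of accepted changes, condition on the preceding history and the acceptance event to get $\mathbb E[U_j]\le\varepsilon_V$, sum by linearity, and finish with Markov's inequality or a union bound, with no independence assumption. Your extra bookkeeping (the sharper intermediate bound $\varepsilon_V\,\mathbb E[N]$ and indexing accepted changes by stopping times) only spells out what the paper leaves implicit.
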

\begin{proof}
Set $U_j=0$ when the episode contains fewer than $j$ accepted fact insertions or modifications.
Conditioning on the previous history and whether the change exists gives $\mathbb E[U_j]\leq\varepsilon_V$.
Linearity of expectation proves the first bound.
The union bound, or the Markov inequality applied to the sum, gives the second.
No independence among accepted errors is required.
\end{proof}

An aggregate audit measures errors among accepted entries but does not establish the uniform conditional assumption in Equation~\ref{eq:conditional-calibration}; neither does freezing or calibrating the classifier.
Faithfulness means that the supplied passage supports the statement, while answer rewards measure its usefulness for the task.

\subsection{The value of retaining unresolved evidence}
\label{sec:theory-delay}

Delayed commitment can help when later context changes which retention decision is useful.
We examine this benefit for a source excerpt that remains in the pending set until later context arrives.
Let $X$ denote currently available information, $Z$ the later observation, and $Y$ the hidden variable determining which final retention decision is useful.
Let $\mathcal A_0$ be a finite nonempty set of feasible final actions, and let $\mathcal L(a,Y)\in[0,1]$ be a bounded task loss.
Both the immediate and later decisions use the same action set and final loss, and the later decision can ignore $Z$.
The optimal conditional risks before and after that observation are
\begin{align}
 R_0(X)&=\min_{a\in\mathcal A_0}\mathbb E[\mathcal L(a,Y)\mid X],\\
 R_1(X)&=\mathbb E\!\left[\min_{a\in\mathcal A_0}\mathbb E[\mathcal L(a,Y)\mid X,Z]\,\middle|\,X\right].
 \label{eq:decision-risks}
\end{align}
The expectations exist by boundedness, and finite action sets admit measurable minimizers with fixed tie breaking.
All conditional statements below hold almost surely.

\begin{theorem}[Conditional value of delayed commitment]
\label{prop:voi}
Under the preceding assumptions, $\Delta(X)=R_0(X)-R_1(X)\geq0$.
Let $\kappa(X)\geq0$ be the additive conditional expected cost of waiting, measured in the same loss units.
This cost includes pending-set occupancy, displaced evidence, and extra computation.
If this cost is independent of the final action, the later decision has no greater total conditional risk exactly when
\begin{equation}
 \Delta(X)\geq\kappa(X).
 \label{eq:waiting-threshold}
\end{equation}
\end{theorem}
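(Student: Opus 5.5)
The plan is to prove the two claims of Theorem~\ref{prop:voi} separately: first the value-of-information inequality $\Delta(X)\geq0$, then the waiting threshold in Equation~\ref{eq:waiting-threshold}.

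\textbf{Nonnegativity.} The argument is the classical fact that optimizing after observing more information cannot be worse, made precise with conditional expectations.

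Fix any $a\in\mathcal A_0$. Almost surely,
\[
\min_{a'\in\mathcal A_0}\mathbb E[\mathcal L(a',Y)\mid X,Z]\leq\mathbb E[\mathcal L(a,Y)\mid X,Z].
\]
Take $\mathbb E[\cdot\mid X]$ of both sides. Monotonicity of conditional expectation and the tower property ($\sigma(X)\subseteq\sigma(X,Z)$) give
\[
R_1(X)\leq\mathbb E[\mathcal L(a,Y)\mid X]\quad\text{a.s.}
\]
Because $\mathcal A_0$ is finite, the almost-sure exceptional sets are a finite union of null sets. We may therefore take the minimum over $a$ on the right, which yields $R_1(X)\leq R_0(X)$ almost surely.

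Measurability is handled the same way. Each $\mathbb E[\mathcal L(a,Y)\mid X,Z]$ has a version that is $\sigma(X,Z)$-measurable and bounded in $[0,1]$. A pointwise minimum of finitely many such versions is again measurable and bounded, so $R_1$ is well defined. This matches the paper's remark about measurable minimizers with fixed tie breaking.

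Since the later decision may ignore $Z$, the constant action $a^\star(X)\in\arg\min_a\mathbb E[\mathcal L(a,Y)\mid X]$ is feasible for it. This gives a second, constructive reading of the same inequality.

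\textbf{Waiting threshold.} Model the total conditional risk of each option.
\begin{itemize}
\item Committing immediately has total conditional risk $R_0(X)$.
\item Waiting and then acting optimally has total conditional risk $R_1(X)+\kappa(X)$. The cost is additive and, by assumption, independent of the final action, so it does not change the inner minimization over $\mathcal A_0$. It simply adds to the optimized risk.
\end{itemize}
The later decision therefore has no greater total conditional risk exactly when
\[
R_1(X)+\kappa(X)\leq R_0(X),
\]
which rearranges to $\Delta(X)\geq\kappa(X)$. Both directions of the ``exactly when'' follow from this single equivalence of real numbers, applied pointwise on a full-measure set.

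\textbf{Main obstacle.} The delicate part is not the algebra but the interpretive step: justifying that $\kappa$ can be pulled outside the minimization. If $\kappa$ depended on the final action, the waiting risk would be $\min_a\mathbb E[\mathcal L(a,Y)+\kappa_a\mid X,Z]$, and the clean threshold would fail. I will state explicitly that the independence hypothesis is what licenses
\[
\mathbb E\!\left[\min_a\bigl(\cdot+\kappa\bigr)\,\middle|\,X\right]=R_1(X)+\kappa(X),
\]
using that $\kappa(X)$ is $\sigma(X)$-measurable and can be taken out of the conditional expectation. Beyond that, the only care needed is with almost-sure qualifiers, which the finiteness of $\mathcal A_0$ handles.
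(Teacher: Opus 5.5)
Your proposal is correct and follows essentially the paper's argument: you bound the inner minimum by the loss of a fixed action, apply $\mathbb E[\cdot\mid X]$ with the tower property to get $R_1(X)\leq R_0(X)$, and then compare $R_1(X)+\kappa(X)$ with $R_0(X)$. The paper instead plugs in the $X$-measurable minimizer $a_0(X)$ directly, whereas you bound by each constant action and then minimize over the finite set $\mathcal A_0$; this is a cosmetic difference that also removes the need to justify $\mathbb E[\mathcal L(a_0(X),Y)\mid X]=R_0(X)$ separately.
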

\begin{proof}
Choose $a_0(X)$ attaining the minimum in $R_0(X)$.
For every realized later observation, minimizing over $\mathcal A_0$ gives
\[
 \min_{a\in\mathcal A_0}\mathbb E[\mathcal L(a,Y)\mid X,Z]
 \leq\mathbb E[\mathcal L(a_0(X),Y)\mid X,Z].
\]
Taking conditional expectation given $X$ and using the tower property yields $R_1(X)\leq R_0(X)$.
The later total risk is $R_1(X)+\kappa(X)$, so comparing it with $R_0(X)$ proves Equation~\ref{eq:waiting-threshold}.
\end{proof}

\paragraph{An explicit retention example.}
Let $Y\in\{0,1\}$ indicate whether a source detail will be needed, with $p=\Pr(Y=1\mid X)$.
Keeping an unneeded detail incurs cost $c_{\mathrm{ret}}\in[0,1]$, and discarding a needed detail incurs cost $c_{\mathrm{miss}}\in[0,1]$.
The other two outcomes have zero loss, so
\begin{equation}
 R_0(X)=\min\{(1-p)c_{\mathrm{ret}},\;p c_{\mathrm{miss}}\}.
 \label{eq:binary-retention}
\end{equation}
If the later observation identifies $Y$, the later risk $R_1(X)$ is zero.
Waiting is beneficial when its full cost is smaller than the risk in Equation~\ref{eq:binary-retention}.
This is a local decision comparison with the same feasible final actions; it does not establish the superiority of a complete learned policy whose allocation between $M$ and $P$ changes other retention decisions.

\paragraph{Retention under the capacity bound.}
The preceding comparison assumes that the source remains available while the controller waits.
Pending evidence has no age limit.
To state a sufficient condition for retention, let $\bar\ell$ be a nonnegative additive upper bound on the length of the pending serialization.
It includes separators, the source text and identifiers of each pending item, and the maximum token cost of its rank among $1,\ldots,B_P$.
This bound remains monotone under removal, even when renumbering changes tokenization lengths.
Fix the pending set $P_{t_0}$ after a completed reading step, with $t_0<u<T$.
Let $\mathcal U_{t_0:v}$ contain every distinct source excerpt selected for admission during steps $t_0+1,\ldots,v$, including all admissions within each step.
If
\begin{equation}
 \bar\ell(P_{t_0}\cup\mathcal U_{t_0:v})\leq B_P\quad\text{for every }t_0<v\leq u,
 \label{eq:survival-condition}
\end{equation}
and the policy applies \textsc{Keep} to $p\in P_{t_0}$ without releasing it after another promotion, then no capacity rule removes $p$ through the end of step $u$.
By monotonicity, every intermediate admission fits even without space released by removals, so no forced capacity action is needed.
The restriction $u<T$ excludes resolution of all remaining pending evidence at the final step.

Equation~\ref{eq:survival-condition} supplies a sufficient condition for the source excerpt to remain available, as assumed in Theorem~\ref{prop:voi}.
It also motivates measuring retention across evidence gaps and pending-set allocations.

\subsection{Step-level evidence rewards and final answer rewards}
\label{sec:theory-learning}

LongRLVR analyzes why final-answer rewards provide a weak learning signal for intermediate evidence decisions~\citep{chen2026longrlvr}.
We study the same question for source-supported facts in committed memory.
The source-selection reward in LongRLVR supervises a different quantity from the step-level evidence reward in \our.
Our analysis concerns whether cited sources support a proposed fact; final answer rewards determine which supported facts are useful.

Consider $m$ required memory decisions.
At decision $j$, the policy proposes a source-supported fact when $Z_j=1$ and an unsupported fact when $Z_j=0$.
For this analysis, assume independent Bernoulli choices $Z_j\sim\mathrm{Bernoulli}(p_j)$ with $p_j=\sigma(\vartheta_j)$ and separate logits $\vartheta_j$.
Let $\Gamma_j\in\{0,1\}$ indicate gate acceptance, with fixed rates
\begin{equation}
 a_j=\Pr(\Gamma_j=1\mid Z_j=1),\qquad
 b_j=\Pr(\Gamma_j=1\mid Z_j=0),\qquad a_j>b_j.
 \label{eq:verifier-separation}
\end{equation}
Gate outcomes are conditionally independent across decisions given the $Z_j$.

An independent variable $U\sim\mathrm{Bernoulli}(c)$ represents the remaining answer-generation success.
The simplified final answer reward is $R^{\mathrm{ans}}=U\prod_{k=1}^{m}Z_k\Gamma_k$, so success requires every fact in the chain to be source-supported and accepted.
Each verifier rejection incurs the evidence penalty $r_j=-\lambda_v(1-\Gamma_j)$.
The model holds the relevant proposals and verifier discrimination rates fixed.
It therefore analyzes the choice between supported and unsupported facts without modeling how the policy discovers useful facts.

\begin{theorem}[A source-support learning signal without final-answer success]
\label{prop:local-signal}
Under these assumptions,
\begin{align}
 \frac{\partial\mathbb E[R^{\mathrm{ans}}]}{\partial\vartheta_j}
 &=p_j(1-p_j)\,c a_j\prod_{k\ne j}p_k a_k,
 \label{eq:sparse-gradient}\\
 \frac{\partial\mathbb E[r_j]}{\partial\vartheta_j}
 &=\lambda_v(a_j-b_j)p_j(1-p_j).
 \label{eq:local-gradient}
\end{align}
Thus the expected combined return $\alpha R^{\mathrm{ans}}+(1-\alpha)T^{-1}\sum_k r_k$ has derivative
\begin{equation}
 p_j(1-p_j)\left[\alpha c a_j\prod_{k\ne j}p_k a_k
 +\frac{(1-\alpha)\lambda_v}{T}(a_j-b_j)\right].
 \label{eq:combined-gradient}
\end{equation}
For $p_j\in(0,1)$, $\lambda_v>0$, and $\alpha<1$, the second term is positive.
This term contains no factor for the probability that all other required facts are supported and accepted.
\end{theorem}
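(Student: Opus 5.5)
The plan is to compute both expectations in closed form using the independence structure, then differentiate with respect to $\vartheta_j$ via $\partial p_j/\partial\vartheta_j=\sigma'(\vartheta_j)=p_j(1-p_j)$.

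\textbf{Answer reward.} Since $U$ is independent of everything, $\mathbb E[R^{\mathrm{ans}}]=c\,\mathbb E[\prod_k Z_k\Gamma_k]$. The $Z_k$ are independent, and the gate outcomes are conditionally independent given the $Z$'s, with $\Gamma_k$ depending only on $Z_k$. We can therefore condition on $(Z_1,\ldots,Z_m)$ and factor, giving $\mathbb E[\prod_k Z_k\Gamma_k]=\prod_k\mathbb E[Z_k\Gamma_k]$. Each factor equals $\Pr(Z_k=1)\Pr(\Gamma_k=1\mid Z_k=1)=p_k a_k$. Hence $\mathbb E[R^{\mathrm{ans}}]=c\prod_k p_k a_k$. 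Only the $j$th factor depends on $\vartheta_j$, because the logits are separate. Differentiating yields Equation~\ref{eq:sparse-gradient}.

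\textbf{Evidence reward.} We have $\mathbb E[r_j]=-\lambda_v(1-\Pr(\Gamma_j=1))$. By total probability, $\Pr(\Gamma_j=1)=p_j a_j+(1-p_j)b_j=b_j+p_j(a_j-b_j)$. Differentiating gives $\lambda_v(a_j-b_j)p_j(1-p_j)$, which is Equation~\ref{eq:local-gradient}. For $k\ne j$, $\mathbb E[r_k]$ does not depend on $\vartheta_j$. Therefore the derivative of $T^{-1}\sum_k\mathbb E[r_k]$ is just $T^{-1}$ times Equation~\ref{eq:local-gradient}.

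\textbf{Combining.} Linearity of expectation and of differentiation with weights $\alpha$ and $(1-\alpha)/T$, followed by factoring out $p_j(1-p_j)$, gives Equation~\ref{eq:combined-gradient}. For the sign claim, note that $p_j\in(0,1)$ makes $p_j(1-p_j)>0$, while $\lambda_v>0$, $1-\alpha>0$, and $a_j>b_j$ by Equation~\ref{eq:verifier-separation}. The second term is thus strictly positive. It visibly contains no factor $\prod_{k\ne j}p_k a_k$, nor any factor $c$.

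The only step needing care is the factorization of $\mathbb E[\prod_k Z_k\Gamma_k]$. I would justify it explicitly by the tower property: condition on all $Z$'s, use conditional independence of the gates to obtain $\prod_k Z_k\Pr(\Gamma_k=1\mid Z_k)$, and then use independence of the $Z_k$. The remaining steps are routine calculus.
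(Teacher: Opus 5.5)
Your proposal is correct and follows the paper's proof essentially step for step. You factor $\mathbb E[R^{\mathrm{ans}}]=c\prod_k p_k a_k$ by independence, write the acceptance probability as $p_ja_j+(1-p_j)b_j$, differentiate using $\partial p_j/\partial\vartheta_j=p_j(1-p_j)$, and combine by linearity. Your explicit tower-property step usefully surfaces an assumption the paper also uses without stating it: each $\Gamma_k$ depends on the $Z$'s only through $Z_k$.
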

\begin{proof}
Conditional independence gives $\mathbb E[R^{\mathrm{ans}}]=c\prod_k p_k a_k$.
Differentiating with $\partial p_j/\partial\vartheta_j=p_j(1-p_j)$ proves Equation~\ref{eq:sparse-gradient}.
The acceptance probability at decision $j$ is $p_j a_j+(1-p_j)b_j$, so
\[
 \mathbb E[r_j]=-\lambda_v\{1-p_j a_j-(1-p_j)b_j\}.
\]
Differentiation gives Equation~\ref{eq:local-gradient}.
The other evidence penalties do not depend on $\vartheta_j$ by the separate-logit and independence assumptions.
Linearity proves Equation~\ref{eq:combined-gradient}.
\end{proof}

Verification can provide useful feedback on an unsupported fact even when missing evidence at another hop causes the final answer to be incorrect.
The model compares alternatives at a fixed decision, consistent with the absence of a per-write bonus in Equation~\ref{eq:reward}.
The positive evidence term requires verifier discrimination $a_j>b_j$ and $0<p_j<1$.
The independent faithfulness audit measures aggregate discrimination, whereas the condition $a_j>b_j$ must hold at each decision for the theorem.
The audit does not establish that all answer-relevant evidence has been retained.

\begin{proposition}[Effect of group centering]
\label{prop:group-centering}
Consider $G\geq2$ independent rollouts with the same input, and let $R_i$ denote the return of each rollout.
Let $S_i=\nabla_\theta\log\pi_\theta(\tau_i)$ be the corresponding score, with $\mathbb E[S_i]=0$.
With $\bar R=G^{-1}\sum_iR_i$,
\begin{equation}
 \mathbb E\!\left[\frac1G\sum_{i=1}^{G}(R_i-\bar R)S_i\right]
 =\frac{G-1}{G}\,\nabla_\theta\mathbb E[R].
 \label{eq:group-centering-proof}
\end{equation}
\end{proposition}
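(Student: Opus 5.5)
The plan is to expand the centered estimator into its $G$ diagonal and $G(G-1)$ off-diagonal contributions, then evaluate each class using independence of the rollouts and the zero-mean score. Write
\[
 \frac1G\sum_{i=1}^{G}(R_i-\bar R)S_i
 =\frac1G\sum_{i=1}^{G}R_iS_i-\frac1{G^2}\sum_{i=1}^{G}\sum_{j=1}^{G}R_jS_i .
\]
Each pair $(R_i,S_i)$ comes from the same trajectory $\tau_i$, and the rollouts are i.i.d. given the shared input. Hence $\mathbb E[R_iS_i]=\mathbb E[RS]$ for every $i$.

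The first key step is the score-function identity. By the log-derivative trick, $\mathbb E[RS]=\sum_\tau R(\tau)\nabla_\theta\pi_\theta(\tau)=\nabla_\theta\mathbb E[R]$. This requires $R(\tau)$ not to depend on $\theta$ beyond the sampled trajectory, which holds because the verifier is frozen and the controller is deterministic. With finite vocabularies and bounded lengths the sum is finite, so differentiation commutes with the sum. The first term therefore has expectation $\nabla_\theta\mathbb E[R]$.

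The second step treats the double sum, split into its $G$ diagonal terms $i=j$ and its $G(G-1)$ off-diagonal terms $i\neq j$. Each diagonal term has expectation $\nabla_\theta\mathbb E[R]$, as above. For $i\neq j$, independence of $\tau_i$ and $\tau_j$ gives $\mathbb E[R_jS_i]=\mathbb E[R_j]\,\mathbb E[S_i]=0$, using the hypothesis $\mathbb E[S_i]=0$. This hypothesis itself follows from $\sum_\tau\nabla_\theta\pi_\theta(\tau)=\nabla_\theta 1=0$. The second term thus has expectation $\frac{1}{G^2}\cdot G\,\nabla_\theta\mathbb E[R]=\frac1G\nabla_\theta\mathbb E[R]$.

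Subtracting the two expectations gives $\bigl(1-\tfrac1G\bigr)\nabla_\theta\mathbb E[R]=\tfrac{G-1}{G}\nabla_\theta\mathbb E[R]$, which is the claimed identity. The main obstacle is conceptual rather than computational. The statement uses trajectory-level returns and scores, whereas the actual objective in Equation~\ref{eq:clipped_objective} uses per-token clipped ratios and step-indexed advantages. I would make explicit that the proposition concerns the unclipped on-policy estimator with a scalar trajectory return. In that setting the only delicate point is justifying the interchange of gradient and expectation, which the finiteness of the trajectory space handles. The conclusion is that group centering keeps the gradient direction unbiased up to the known factor $(G-1)/G$, since the self-inclusion of $R_i$ in $\bar R$ is the sole source of shrinkage.
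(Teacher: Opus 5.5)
Your proof is correct and follows the paper's route. The cross terms $\mathbb E[R_jS_i]$ with $j\neq i$ vanish by independence and $\mathbb E[S_i]=0$, the self-inclusion of $R_i$ in $\bar R$ leaves $(1-G^{-1})\mathbb E[R_iS_i]$, and the score-function identity turns this into $\frac{G-1}{G}\nabla_\theta\mathbb E[R]$; you also helpfully make explicit several assumptions the paper uses only implicitly (identically distributed rollouts, a return that depends on $\theta$ only through the trajectory, and a finite trajectory space so the gradient and sum can be exchanged).
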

\begin{proof}
For $k\ne i$, independence implies $\mathbb E[R_kS_i]=\mathbb E[R_k]\mathbb E[S_i]=0$.
The self term contributes $G^{-1}\mathbb E[R_iS_i]$ to $\mathbb E[\bar R S_i]$.
Subtracting from $\mathbb E[R_iS_i]$ gives $(1-G^{-1})\mathbb E[R_iS_i]$.
Averaging over $i$ and applying the score-function identity proves the result.
\end{proof}

The same argument applies to return-to-go at a fixed decision index.
Conditional on the observed prefix, earlier rewards contribute zero expected score-function gradient.
Group centering therefore preserves the direction of the evidence-learning signal at the rollout policy for an unclipped score-function objective without KL regularization.
The practical objective in Equation~\ref{eq:clipped_objective} also uses token averaging, clipping, and KL regularization.
Equation~\ref{eq:group-centering-proof} does not establish the same gradient direction or convergence for these modified updates.
The analysis explains how verification supplies intermediate supervision; answer accuracy and the quality of commitment timing remain empirical questions.

\clearpage
\section{Controller and optimization details}
\label{sec:algorithm-details}

\subsection{Notation and action representation}

Within reading step $t$, $M$ and $P$ denote the current committed memory and pending set; $M_t$ and $P_t$ denote the state after the step is completed.
A new candidate specifies source offsets $(t,l,r)$ and an action in $\{\textsc{Promote},\textsc{Keep},\textsc{Drop}\}$; promotion additionally supplies a proposed fact.
Offsets $[l,r)$ refer to token positions in the current chunk under the backbone tokenizer.
The controller copies the exact sequence $x_t[l:r]$ and reconstructs its displayed text, rather than accepting a quotation generated by the policy.

The admission rank $b$ records the position of an excerpt in the pending set, ordered by admission.
New excerpts are appended in admission order, and remaining ranks are renumbered from 1 after removal without changing that order.
Admission order is therefore recoverable from $P$ itself and requires no uncounted global counter.
The rank determines which pending item is resolved first under capacity pressure; it does not measure confidence or impose an age limit.
Its representation counts toward $\ell(P)$.
A fact may cite several spans, with each retained excerpt and identifier stored and counted separately.
The limit $H$ counts all new source specifications per reading step, including invalid candidates and immediate promotions.

Source text is available only from the current chunk $x_t$ and verbatim excerpts retained in the pending set $P$.
An earlier reference $(t',l',r')$, $t'<t$, is usable only if the complete cited span remains in a pending excerpt.
After that excerpt is removed, its identifier in $M$ cannot recover the source text.
The identifiers $\rho$ therefore record provenance and select an \emph{available} premise for $V_\phi$; they do not retrieve historical text.

The action format requires a separate proposed fact for each independently checkable relation.
A conjunction such as \emph{A was born in B and founded C} should occupy two fields, each with its supporting spans.
The parser checks the action format but cannot always determine the number of independently checkable relations in a sentence.
The verifier therefore evaluates the full proposed fact.
For a fact requiring several excerpts, the premise $S_e$ concatenates them with source separators.
The complete pair $(S_e,f)$ is then retokenized for the verifier, and premises are never truncated to fit $W_V$.

\subsection{Memory decisions and atomic memory updates}

\paragraph{Candidate validation.}
The controller validates at most $H$ new source specifications per reading step and copies the corresponding original tokens in the order specified by the policy.
Invalid candidates still count toward the limit and receive the invalid-operation penalty, but cannot displace existing evidence.
Validation changes neither $M$ nor $P$.
Candidates selected for retention are submitted for pending admission; immediate promotions use the same validated source text.

\paragraph{Memory decisions and atomic transactions.}
Immediate promotions and actions on existing pending evidence follow the order specified by the policy.
\textsc{Keep} preserves a pending excerpt, whereas \textsc{Drop} removes the entire item.
Promotion constructs $(E^-,E^+)$ and the premises $S_e$, then applies Equations~\ref{eq:gate} and~\ref{eq:transaction}.
A successful transaction updates $M$ before releasing the pending excerpts selected for removal.
For the pending evidence $p$ selected for promotion, a successful transaction releases $p$; additional supporting excerpts may be retained or released together.
A failed transaction preserves $M$; its supporting pending excerpts also remain unchanged unless capacity pressure or resolution at the final step requires their removal.
Facts rejected by the verifier contribute to $m_{i,t}^{\mathrm{rej}}$ only after passing all non-entailment checks.
An invalid operation contributes to $m_{i,t}^{\mathrm{inv}}$ and does not also count as an NLI rejection.

For counting attempts, a source record is a validated source excerpt identified by its chunk and token offsets.
Its promotion-attempt limit $J$ is shared across ordinary calls, capacity handling, and resolution at the final step within one reading step; rephrasing a fact does not reset it.
Each proposed fact consumes one attempt for every source record it cites, so a transaction with multiple facts can consume several attempts.
A proposal exceeding the remaining limit is rejected before verification.
Transactions using multiple sources check every premise before releasing any supporting excerpt.
If an accepted action releases a source required by a later action, the later action fails the source-availability check.
Immediate promotions also count toward $H$ and $J$.

\paragraph{Capacity pressure and the final step.}
A validated candidate that fits within Equation~\ref{eq:budget} requires no additional policy call.
A candidate too large to fit alone is rejected before any existing source is removed.
Otherwise, each capacity event permits at most $K$ additional calls to reconsider pending evidence using $q,x_t,M,P$, under the same transaction rules and attempt limits.
If the candidate still does not fit, the controller repeatedly selects the oldest pending item for resolution.
When a promotion attempt remains, the policy can promote or drop that item, with \textsc{Keep} disabled.
A failed promotion or exhausted attempt limit causes the selected item to be dropped, so each such resolution removes at least one pending item.
The candidate is admitted only after the complete serialization of committed memory and the pending set passes the budget checks.

After processing the final chunk, the controller resolves all remaining pending evidence in admission order with \textsc{Keep} disabled.
Each item is promoted under the same verification and transaction rules or dropped.
Resolution at the final step uses the remaining promotion attempts without resetting $J$.
Promotions using multiple sources check all still-available premises before releasing the selected pending excerpts together.
During resolution under capacity pressure or at the final step, an invalid response, including a malformed action or a disallowed \textsc{Keep}, causes the selected pending item to be dropped without another policy call.
Under these rules, the step ends with $P_T=\varnothing$, and the policy answers from $q$ and $M_T$.

\paragraph{Capacity enforcement and bounded calls.}
Capacity checks use the complete serialization after each update, including source identifiers, separators, and renumbered pending ranks.
Committed-entry deletion and insertion are checked as one transaction; rejection leaves the previous $M$ unchanged.
Further attempts remain subject to the existing $H,K,J$ limits; if a pending item must be resolved but has no promotion attempt left, it is dropped without another policy call.
Question, chunk, policy-output, and final-answer lengths are bounded by $L_q,C,W_\pi,W_{\mathrm{ans}}$, respectively.
Each additional call uses the current bounded observation and feedback without accumulating transcripts from earlier calls.
Appendix~\ref{app:implementation} provides the numerical settings and decoding parameters.

\subsection{Token-level objective and credit assignment}

Let $\mathcal I_i$ index policy-generated tokens in trajectory $i$, including memory decisions, proposed facts, entry-removal choices, and the answer.
Their total count is $N_{\mathrm{gen}}=\sum_i|\mathcal I_i|$.
For $k\in\mathcal I_i$, $\xi_{i,k}$ contains the current bounded observation, controller feedback, and tokens generated earlier within the same policy call.
Source text and verifier feedback are fixed inputs during differentiation.
All bounded calls within a chunk share the same index $t$ in $A_{i,t}$.
The advantage therefore compares complete memory transitions without assigning a separate causal score to each source.

For vocabulary $\mathcal V$, the divergence in Equation~\ref{eq:clipped_objective} is
\begin{equation}
 D_{\mathrm{KL}}\!\left(\pi_\theta(\cdot\mid\xi)\,\|\,\pi_{\mathrm{ref}}(\cdot\mid\xi)\right)
 =
 \sum_{w\in\mathcal V}\pi_\theta(w\mid\xi)
 \log\frac{\pi_\theta(w\mid\xi)}{\pi_{\mathrm{ref}}(w\mid\xi)}.
 \label{eq:kl-detail}
\end{equation}
Equation~\ref{eq:clipped_objective} averages this divergence over sampled token contexts.
The reference policy $\pi_{\mathrm{ref}}$ and rollout policy $\pi_{\theta_{\mathrm{old}}}$ remain fixed within each optimization batch; the numerator of $r^\pi_{i,k}$ and the KL term use the current policy $\pi_\theta$.
Completed-rollout advantages receive no gradient.

Consider an excerpt admitted at chunk 2, kept at chunk 3, and used in an unsupported promotion at chunk 5.
The resulting penalty appears in every $L_{i,t}$ with $t\leq5$, including the admission and retention steps, but not in later returns.
It therefore assigns statistical credit to earlier memory decisions as well as the eventual proposal; the answer reward evaluates their task utility.
Supported proposals receive no bonus per write, so storing additional supported but irrelevant facts cannot by itself increase the evidence reward.

When all trajectories receive the same answer score, the centered answer component is zero, but different rejection histories can yield nonzero evidence advantages.
If both components are constant, the combined advantage is zero; KL regularization can still contribute to the update.
Centering does not divide by the standard deviation, preserving the chosen scales $\lambda_v,\lambda_f,\alpha$.
During answer generation, the evidence return is zero because no memory update follows.

\end{document}